\newtheorem{theorem}{Theorem}
\newtheorem{assumption}{Assumption}
\newtheorem{lemma}{Lemma}
\newcommand{\R}{\mathbb{R}}
\newcommand{\E}{\mathbb{E}}
\newcommand{\eqdef}{\stackrel{\mathrm{def}}{=}}
\DeclareMathOperator*{\diag}{{\rm Diag}}
\DeclareMathOperator{\proj}{proj}
\DeclareTextFontCommand{\emph}{\em}
\newcommand{\idlow}[1]{\mathord{\mathcode`\-="702D\it #1\mathcode`\-="2200}}
\newcommand{\id}[1]{\ensuremath{\idlow{#1}}}
\renewcommand{\paragraph}[1]{\textbf{#1}}
\title{Unified Scaling Laws for Compressed Representations}
\author{%
  Andrei Panferov\thanks{Equal contribution.} \\
  ISTA \\
  \And  
  Alexandra Volkova\footnotemark[1] \\
  ISTA \\
  \And
  Ionut-Vlad Modoranu\\
  ISTA \\
  \And
  Vage Egiazarian\\
  ISTA \\
  \And
  Mher Safaryan \\
  ISTA \\
  \And
  Dan Alistarh\thanks{Correspondence to: dan.alistarh@ist.ac.at.} \\
  ISTA \& Red Hat AI \\
}
\def\@trackname{}
\begin{document}
\maketitle
\begin{abstract}
Scaling laws have shaped recent advances in machine learning by enabling predictable scaling of model performance based on model size, computation, and data volume. Concurrently, the rise in computational cost for AI has motivated model compression techniques, notably quantization and sparsification, which have emerged to mitigate the steep computational demands associated with large-scale training and inference. This paper investigates the interplay between scaling laws and compression formats, exploring whether a unified scaling framework can accurately predict model performance when training occurs over various compressed representations, such as sparse, scalar-quantized, sparse-quantized or even vector-quantized formats. Our key contributions include validating a general scaling law formulation and showing that it is applicable both individually but also composably across compression types. Based on this, our main finding is demonstrating both theoretically and empirically that there exists a simple ``capacity'' metric---based on the representation's ability to fit random Gaussian data---which can robustly predict parameter efficiency across multiple compressed representations. On the practical side, we extend our formulation to directly compare the accuracy potential of different compressed formats, and to derive better algorithms for training over sparse-quantized formats. 
\end{abstract}

\section{Introduction}

A key recent advance in machine learning has been the idea of \emph{predictable scaling} of learning performance with respect to \emph{model, computation and data sizes}. This approach is encompassed by the \emph{scaling laws}~\citet{kaplan2020scalinglawsneurallanguage}, which allow researchers to predict the values of these three parameters required to reach a certain model performance. 
This powerful idea has been expanded upon in several directions, e.g.~\cite{hoffmann2022trainingcomputeoptimallargelanguage, clark2022unified, liu2025paretoq}, and is a key ingredient behind the massive expansion of computational power for AI~\citep{gibney2022shrink}. 

A parallel research direction, motivated by this massive increase in computational cost, has been \emph{model compression}, which proposes a series of techniques to reduce the computational and memory footprint of model inference and training, via techniques such as \emph{sparsification}~\citep{hoefler2021sparsity} and \emph{quantization}~\citep{gholami2022survey}. 
In this paper, we focus on the interplay between scaling laws and the degree of compression of the representation over which learning occurs. 
While there is significant emerging work in this direction, e.g.~\citep{frantar2023scalinglawssparselyconnectedfoundation, kumar2024scaling, sun2025scaling, sardana2024chinchillaoptimalaccountinginferencelanguage}, current scaling laws are specialized to single representations (e.g., quantization or sparsity) and/or formats (e.g., integer quantization), and cannot yet address the question of predicting model scaling behavior when training over general compressed representations.

\paragraph{Contributions.} This paper is structured two main questions, and their practical ramifications: 

\paragraph{Q1: Is there a unified compression scaling law?} 
First, we wish to find a single general law that not only applies to sparse~\citep{frantar2023scalinglawssparselyconnectedfoundation} or quantized~\citep{kumar2024scaling} representations in isolation, but that also provides a good fit for \emph{hybrid formats}, such as sparse-and-quantized weights, or \emph{compound compression}, i.e. sparse weights \emph{and} activations. Through extensive experimentation, we identify this law to be of the form
\begin{equation}
\id{Loss}(N,D) \sim {A}\cdot (N \cdot \rho(R))^{-\alpha}
+ {B}\cdot D^{-\beta} + E, \label{eq:main}
\end{equation} 
where $N$ is the number of model parameters, $D$ is the dataset size, $E$ is the irreducible error,  $A$, $B$, $\alpha$ and $\beta$ are constants, and $\rho$ is a parametric function of the representation $R$. Crucially, we find that, even for very complex 
representations---e.g. 3-bit quantization with group size 32 and 1\% outliers in full-precision---the parametric function $\rho$ can still predict the scaling of model performance w.r.t. the parameter count $N$. We call $\rho(R)$  \textbf{the representation capacity} of $R$. Consequently, there is always a  ``dense equivalent'' parameter count $N' = N \cdot \rho(R)$ which would yield the same loss during training. The capacity $\rho(R)$ lies naturally in the interval $(0, 1]$, and the key goal of compression is to maximize the trade-off between model accuracy and the size  and computational cost  of the representation. 

\paragraph{Q2: Is capacity an ``intrinsic'' property of the representation?} 
While related forms of the above law have been proposed in prior work~\citep{frantar2025compressionscalinglawsunifyingsparsity, kumar2024scaling}, we are the first to show that capacity is an intrinsic property of the representation, independent of the model and task for which the scaling law is obtained, but relatable to standard information-theoretic measures. Moreover, we establish the applicability of the law across hybrid (e.g. sparse-quantized weights) or composite (e.g. quantized weights-and-activations) representations. 

More precisely, our main finding is that \emph{capacity is tightly-correlated with the representation's ability to fit random Gaussian data, measured in terms of minimal mean-squared error (MSE)}. Concretely, $\rho(R)$ is a simple parametric function of the $\id{MSE}$ of the representation $R$ when fitting random Gaussian data, i.e.  $\rho(R) = \tilde{\rho}(\id{MSE}(R))$, where instances of the same representation $R$, e.g. 3 and 4-bit integer quantization, \emph{share the same parametric form} $\tilde{\rho}$. This finding, which we validate across quantized, sparse, quantized-sparse, and even vector-quantized representations, provides a simple metric to ``rank'' different formats implementing the same representation.  In addition, this also allows us to determine the ``optimal'' capacity at a certain bit-width, which is given by theoretical bounds on Gaussian fitting for a given support, which can be easily estimated via Monte Carlo algorithms. 
In addition, we also provide a non-trivial theoretical justification for this relationship in Theorem~\ref{thm:adam-ste}, for Adam-optimized compressed models: 
we relate the convergence of Adam over compressed representations with the  product between the number of parameters $N$ and the average root mean-squared error of compression across optimization, which connects to our notion of capacity. 

Our second finding is that, except for pathological cases, \emph{capacity factorizes across composite representations}: concretely, the capacity of a 4-bit and 2:4 sparse model is the product between the capacity of the 4-bit dense model, and that of a 2:4-sparse but unquantized model. Factorization allows us to evaluate the capacity of complex representations based on simple ones, and also holds when compressing different model representations, e.g. both weights and activations. 

\paragraph{Practical Implications.} 
The analytical metrics suggested by representation capacity also have non-trivial practical applications. 
First, the fact that we are able to relate the predictive parameter $\rho$ to intrinsic properties of the underlying representation gives us the ability to \emph{analytically predict the representational power of different compressed numerical formats}.   
This way, we can accurately compare and predict the efficacy of various formats such as Floating-Point, Integer (INT with and without grouping), or sparse-quantized formats (2:4 + INT) at different compression budgets. 
Second, this framework inspires an improved approach for sparse training, which we show provides significant improvements (above 20\% in some sparsity regimes) in capacity at the same number of parameters. 
% Finally, we discuss future work and ``breakdown scenarios'' for our scaling law approach. 

Overall, our results provide a new lens to view the scaling properties of compressed models, with respect to intrinsic properties of the representation over which training is performed. Thus, we believe that capacity-aware scaling has the potential to become a practical design principle for the next generation of efficient foundation models. 

\section{Preliminaries}

\label{sec:background}

\paragraph{Scaling Laws.} We start from the ``Chinchilla''  scaling law formulation~\citep{hoffmann2022trainingcomputeoptimallargelanguage} that proposed to model loss scaling as a function of the number of parameters in the model $N$ and the number of data points $D$ the model was trained on, in the form the parametric function:  
\begin{equation}
    \label{eq:chinchilla_law}
    \id{Loss}(N, D) = AN^{-\alpha} + BD^{-\beta} + E,
\end{equation}
where $A$, $B$, $E$, $\alpha$, and $\beta$ are the scaling law parameters that can be fit empirically. It is important to note that such scaling laws assume an ideal, well-tuned training setup, and that the parameter may vary slightly depending on architecture, optimizer, and hyper-parameters. 

\paragraph{Compressed Representations.} 
For \emph{sparsity}, we assume that a specific fraction, within each parameter group of a certain size $G$, is set to zero. Sparsity is \emph{unstructured} if the group is the whole tensor, whereas it is semi-structured (N:M) if $N$ parameters out of every $M$ are set to zero. 
For \emph{quantization}, unless otherwise stated, we assume that parameters are mapped onto a \emph{scalar, symmetric} grid corresponding to the number of bits available for quantization, as is standard~\citep{gholami2022survey}. (We will also consider vector quantization in Section~\ref{sec:mse}.) 
For \emph{sparse-quantized} representations, we follow~\cite{burcu} by first applying sparsification, and then quantization, to map continuous parameters onto this format.

% \subsection{Scaling Law Formulations}
\label{sec:scaling-law-formulations}

\begin{table}[t]
\caption{Representation scaling laws (rows) versus the quantities of interest (columns).  For all laws, \(N\) represents the number of parameters, \(D\) is the data, and $E$ is the irreducible error. For the sparsity scaling law of~\citet{frantar2022gptq}, \(S\) is the sparsity and the lowercase parameters are learnable constants. For the precision scaling law of~\citet{kumar2024scaling}, \(P_w\) is the weight precision, and $\gamma_P$ is a learnable weight sensitivity parameter. For the law of~\citet{frantar2025compressionscalinglawsunifyingsparsity}, \(\text{eff}_C \) is the ``effective parameter multiplier,'' that is explicitly fitted for every instance of compression $C$. By contrast, our formulation postulates that the parameter efficiency is a simple parametric function of the representation's capacity to fit random Gaussian data ($\id{GMSE}(R)$).}
\vspace{1em}
\centering
\small
\renewcommand{\arraystretch}{1.25}
\setlength{\tabcolsep}{6pt}
\begin{tabular}{| l | c | c | c |}
\toprule
 \multirow{2}{*}{\textbf{Parametrization}} & \multirow{2}{*}{\textbf{Formulation for $\,Loss(N,D)$}} & \textbf{Sparsity fit} & \textbf{Quantization fit}\\
 & & \textbf{(Error)} & \textbf{(Error)} \\
\midrule
\textbf{Sparsity $S$} & 
\multirow{2}{*}{\( \displaystyle \frac{a_S(1-S)^{b_S}+c_S}{N^{b_N}}
+ \left(\frac{a_D}{D}\right)^{b_D} + E
\)} & \multirow{2}{*}{$5.7\cdot 10^{-4}$} & \multirow{2}{*}{N/A} \\ 
~\citet{frantar2023scalinglawssparselyconnectedfoundation} & & & \\
\hline
\textbf{Quantization to $P_w$} &
\multirow{2}{*}{\(A\bigl[N(1-e^{-P_w/\gamma_w})\bigr]^{-\alpha}
+ B D^{-\beta} + E
\)} & \multirow{2}{*}{N/A} & \multirow{2}{*}{$4.5\cdot10^{-3}$} \\
~\citet{kumar2024scaling} & & & \\
\hline
\textbf{Compression $C$}&\multirow{2}{*}{
\(
\dfrac{A}{(N\cdot\text{eff}_C)^{\alpha}}
+ \dfrac{B}{D^{\beta}} + E
\)} & \multirow{2}{*}{$4.2 \cdot 10^{-4}$} & \multirow{2}{*}{$1.9\cdot 10^{-3}$} \\
~\citet{frantar2025compressionscalinglawsunifyingsparsity}  & & & \\
\hline
\textbf{Representation $R$}&\multirow{2}{*}{
\(
\dfrac{A}{(N \cdot \widetilde{\rho}(\id{GMSE}(R)))^{\alpha}}
+ \dfrac{B}{D^{\beta}} + E
\)} & \multirow{2}{*}{$4.7\cdot10^{-4}$} & \multirow{2}{*}{$2.1\cdot 10^{-3}$} \\
~(OURS)  & & & \\
\hline
\end{tabular}
\label{tab:precision_scaling}
\end{table}

\paragraph{Prior Scaling Laws.} The relationship between the learning representation and the  the scaling law formulation was considered by~\citet{frantar2023scalinglawssparselyconnectedfoundation} for sparsity, and by~\citet{kumar2024scaling} for quantization. 
The scaling laws they propose are described in Table~\ref{tab:precision_scaling}, together with their parametrization, for the special case of weight-only compression. 
While both these laws can predict loss with respect to training over their target representations, their formulation is not designed to generalize to other representations, or to hybrid ones (e.g. sparse-quantized). 

The unified law we consider extends preliminary work by~\citet{frantar2025compressionscalinglawsunifyingsparsity}, who, assuming that training happens over weights compressed in representation $C$, proposed a simple parametric law similar to Equation~\ref{eq:main}, but which is fitted independently for each instance of compressed training, yielding a value of the corresponding parameter efficiency factor, called   $\text{eff}_C$.~\citet{frantar2025compressionscalinglawsunifyingsparsity} focuses on quantization; they fit sparsity in limited experiments, and do not consider hybrid formats. 

\paragraph{Our Approach.} By contrast, our focus is on relating parameter efficiency to intrinsic properties of the representation $R$: in their parlance, we show that, across all instances of a given compressed representation $R$, e.g. uniform integer (INT) quantization, the parameter efficiency has the same parametric form $\rho(R)$, and, in fact, this parametric form is simply a function of the MSE for the representation $R$ w.r.t. random Gaussian data, i.e. $\rho(R) = \tilde{\rho}(\id{GMSE}(R))$. Importantly, $\id{GMSE}(R)$ is an intrinsic property of $R$, and only depends on its own parametrization: for INT, this would be the number of bits we employ per parameter.  

The fact that this parametric form is shared across instances of the same representation (Section~\ref{sec:mse}), is powerful since it allows us to compare and transfer parameters between instances of the same representation $R$.  
 Clearly, if $\id{GMSE} \simeq 0$, then $\rho(P) \simeq 1$, and we recover the original ``dense'' scaling law~\citep{hoffmann2022trainingcomputeoptimallargelanguage}. 
Interestingly, Table~\ref{tab:precision_scaling} shows that our unified law can provide a better fit than the representation-specific formulations of~\citet{frantar2023scalinglawssparselyconnectedfoundation} and~\citet{kumar2024scaling}, and (almost) matches the formulation of~\citep{frantar2025compressionscalinglawsunifyingsparsity}, which is fitted for each compression instantiation $C$.

\paragraph{Setting for Experimental Validation.}
\label{sec:experimental-setup}
For our scaling law investigations, we pretrained decoder-only Transformers following the Llama architecture~\citep{touvron2023llama2openfoundation} for 30M, 50M, 100M and 200M non-embedding parameters. Models were trained on the C4 dataset~\cite{C4}, using the Llama-2 tokenizer~\citep{touvron2023llama2openfoundation}. To ensure we operate in a data-rich regime, we use 100 training tokens per model parameter, and train on fixed-length context windows of 512 tokens. We used AdamW \citep{loshchilov2018decoupled} with a 0.1 ratio of warm-up epochs with cosine scheduler. Our experimental setup is very similar to that of~\citet{frantar2023scalinglawssparselyconnectedfoundation, kumar2024scaling,  frantar2025compressionscalinglawsunifyingsparsity}. 

We follow standard quantization-aware training (QAT) methods, combined with various levels of unstructured weight sparsity. For quantization we employ the gradient estimator of~\cite{panferov2025quest}, a per-layer uniform quantizer with static scaling factors and gradient  masking. Quantization levels range from 1-bit to 8-bit precision. We consider configurations with quantized weights only, activations only, and both simultaneously. For sparsity, we apply unstructured magnitude pruning via top-k thresholding on a per-layer basis. The sparsity mask is recomputed dynamically at each optimization step.
For Vector Quantization (VQ), we follow QuEST scalar quantization and apply it to 2- and 4-dimensional HIGGS grids~\citep{malinovskii-etal-2025-higgs}. To restrain outliers we use the trust estimation method~\citep{panferov2025quest} that zeros out gradients for any point lying outside a hypersphere of a certain radius.

\section{Theoretical Analysis}
\label{sec:theory}

One key focus of our work is whether, given a compressed representation $R$ over which learning is performed, we can identify a predictive metric that correlates with the representation's efficiency $\rho(R)$. 
To identify this metric, we first model the standard weight-compressed LLM optimization process, which combines the Adam algorithm~\citep{kingma2015adam} with the straight-through estimator (STE)~\citep{STE}. We have: 
\begin{eqnarray*}
    \textnormal{ STE Gradient: } \quad g_t &=& \widetilde{\nabla}f(\widehat\theta_t), \qquad \text{where } \widehat\theta_t = \mathcal{C} (\theta_t),\;  \\
    \textnormal{ Optimizer states: } \quad m_t &=& \beta_1 m_{t-1} + (1-\beta_1)g_t \\
    v_t &=& \beta_2 v_{t-1} + (1-\beta_2)g_t^2,
    \qquad \widetilde{v}_t = \max(v_t, \widetilde{v}_{t-1}) \\
    \textnormal{ Model update: }  \theta_{t+1} &=& \theta_t - \eta \frac{m_t}{\sqrt{\widetilde{v}_t + \epsilon}},
\end{eqnarray*}
where $\widetilde\nabla$ represents stochastic mini-batch gradient operator, $\mathcal{C}\colon\R^N\to\R^N$ is the compression scheme, $\beta_1,\beta_2\in(0,1)$ are momentum parameters, $\epsilon>0$ is a small constant for numerical stability and $\eta>0$ is the learning rate or the step-size. All vector operations are element-wise, including the $\max$ operation.
Our analysis relies on the following assumptions, which are standard in adaptive optimization~\citep{zhou2018adaptive,reddi2019AMSGrad,chen2019AdamType,li2022efadaptive,defossez2022a, modoranu2024microadam,robert2024ldadam}: 

\begin{assumption}[Lower bound  and smoothness]\label{ass:smooth}
    The loss function $f\colon\R^N\to\R$ is lower bounded by some $f^* \in \mathbb{R}$ and is $L$-smooth, namely,
    $\|\nabla f(\theta) - \nabla f (\theta')\|_2 \leq L \|\theta-\theta'\|_2,\; \text{for any } \theta,\theta'\in\R^N.$ 
\end{assumption}

\begin{assumption}[Unbiased and bounded stochastic gradient]\label{ass:boundgrad}
    For all iterates $t \ge 1$, the stochastic gradient $g_t$ at $\widehat\theta_t$ is unbiased, $\E[g_{t}] = \nabla f(\widehat\theta_t)$, and uniformly bounded, $\|g_{t}\|_{\infty} \leq G_{\infty}$, by some constant $G_{\infty}\ge 0$.
\end{assumption}

\begin{assumption}[Bounded variance]\label{ass:var}
    For all iterates $t\ge1$, the variance of the stochastic gradient $g_t$ at $\widehat\theta_t$ is uniformly bounded by some constant $\sigma^2\ge0$, namely $\E[\|g_{t} - \nabla f(\widehat\theta_t)\|_2^2] \le \sigma^2$.
\end{assumption}

In this context, our main claim is the following: 

\begin{theorem}[Non-convex convergence analysis]\label{thm:adam-ste}
	Let Assumptions \ref{ass:smooth}, \ref{ass:boundgrad} and \ref{ass:var} hold. Then, choosing step-size $\eta = \min(\eta_0, \frac{1}{\sqrt{T}})$ with $\eta_0=\frac{\epsilon(1-\beta_1)}{2LC\sqrt{N}}$ and $C = 2\sqrt{G_{\infty}^2+\nicefrac{\epsilon}{N}}$,
    % the compressed iterates $\widehat\theta_t$ satisfy
    a randomly chosen compressed iterate $\widehat\theta$ from $\{\widehat\theta_1,\dots,\widehat\theta_T\}$ satisfies
    \begin{equation*}
        % \frac{1}{T}\sum_{t=1}^T \E[\|\nabla f(\widehat\theta_t)\|^2]
        \E[\|\nabla f(\widehat\theta)\|^2]
        \leq \frac{C LG_{\infty}}{\sqrt{\epsilon}}  \cdot \textcolor{blue}{\E\left[ \frac{1}{T}\sum_{t=1}^T\|\widehat{\theta}_t - \theta_t\|_2\right]} \cdot \textcolor{blue}{N} 
        + \frac{C\sqrt{N}}{\sqrt{T}}\left( f(\theta_1)-f^*
        + \frac{L \sigma^2}{\epsilon} \right)
        + \mathcal{O}\left(\frac{N^{\nicefrac{3}{2}}}{T}\right).
    \end{equation*}
\end{theorem}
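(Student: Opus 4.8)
The plan is to run a standard descent-lemma argument for AMSGrad-style Adam, but carefully tracking the bias introduced by evaluating gradients at the compressed iterate $\widehat\theta_t$ rather than at $\theta_t$. First I would apply $L$-smoothness (Assumption~\ref{ass:smooth}) along the update direction to get
$f(\theta_{t+1}) \le f(\theta_t) + \langle \nabla f(\theta_t), \theta_{t+1}-\theta_t\rangle + \tfrac{L}{2}\|\theta_{t+1}-\theta_t\|_2^2$,
and then split $\nabla f(\theta_t) = \nabla f(\widehat\theta_t) + (\nabla f(\theta_t) - \nabla f(\widehat\theta_t))$. The first piece is the ``useful'' term whose conditional expectation aligns with $g_t$ (via Assumption~\ref{ass:boundgrad}); the second piece is the compression error, which by $L$-smoothness is bounded in norm by $L\|\widehat\theta_t - \theta_t\|_2$ --- this is exactly where the blue term $\tfrac1T\sum_t \|\widehat\theta_t-\theta_t\|_2$ enters. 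Since the step $\theta_{t+1}-\theta_t = -\eta\, m_t/\sqrt{\widetilde v_t+\epsilon}$ has $\ell_\infty$-norm controlled by $\eta G_\infty/\sqrt\epsilon$ (because $\|m_t\|_\infty\le G_\infty$ and $\widetilde v_t\ge 0$) and $\ell_2$-norm controlled by $\eta\sqrt{N}G_\infty/\sqrt\epsilon$, the cross term with the compression error contributes something of order $\eta \cdot L\|\widehat\theta_t-\theta_t\|_2 \cdot \sqrt N G_\infty/\sqrt\epsilon$, and the quadratic term contributes $O(\eta^2 L N G_\infty^2/\epsilon)$.

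Next I would handle the main term $\langle \nabla f(\widehat\theta_t), -\eta\, m_t/\sqrt{\widetilde v_t+\epsilon}\rangle$. The usual trick is to compare the preconditioner $1/\sqrt{\widetilde v_t+\epsilon}$ at step $t$ with the one at step $t-1$ (which is measurable w.r.t. the past), absorbing the difference using $0\le \widetilde v_{t-1}\le \widetilde v_t$, $\|g_t\|_\infty\le G_\infty$, and $\widetilde v_t - \widetilde v_{t-1} \le (1-\beta_2)g_t^2$ in the $\max$-free steps (the monotone-max version of AMSGrad makes this clean). Taking conditional expectations turns $m_t$ into a convex combination of past unbiased gradients; the standard momentum-unrolling bound then shows the dominant contribution is $-c\,\eta\,\E\|\nabla f(\widehat\theta_t)\|_2^2/\sqrt{G_\infty^2+\epsilon/N}$ for an absolute constant $c$ depending on $\beta_1,\beta_2$, plus lower-order telescoping terms and a variance term of size $O(\eta^2 L\sigma^2/\epsilon \cdot \text{(something)})$ coming from $\E\|g_t-\nabla f(\widehat\theta_t)\|_2^2\le\sigma^2$ (Assumption~\ref{ass:var}).

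Then I would sum over $t=1,\dots,T$, telescope $f(\theta_1)-f(\theta_{T+1})\le f(\theta_1)-f^*$, divide by $T$, and rearrange to isolate $\tfrac1T\sum_t\E\|\nabla f(\widehat\theta_t)\|_2^2$, which equals $\E\|\nabla f(\widehat\theta)\|^2$ for the randomly sampled $\widehat\theta$. Substituting $\eta=\min(\eta_0,1/\sqrt T)$ with $\eta_0=\epsilon(1-\beta_1)/(2LC\sqrt N)$ is calibrated precisely so that the cross term's coefficient $\eta L\sqrt N G_\infty/\sqrt\epsilon$ collapses to the stated $CLG_\infty/\sqrt\epsilon$ scaling after the $1/\eta$ factor from rearranging, the $f(\theta_1)-f^*$ and $\sigma^2$ terms pick up the $C\sqrt N/\sqrt T$ prefactor, and the $\eta^2$-order leftovers become the $O(N^{3/2}/T)$ remainder; I would just verify the constants match $C=2\sqrt{G_\infty^2+\epsilon/N}$.

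The main obstacle I anticipate is bookkeeping the interaction between the compression bias and the adaptive preconditioner simultaneously: unlike vanilla SGD, here the descent direction $m_t/\sqrt{\widetilde v_t+\epsilon}$ is a nonlinear, history-dependent function of stochastic gradients evaluated at the \emph{biased} points $\widehat\theta_t$, so the ``unbiasedness'' of $g_t$ is only w.r.t. $\nabla f(\widehat\theta_t)$, not $\nabla f(\theta_t)$. Keeping these two reference points cleanly separated throughout the momentum unrolling — and making sure the compression-error term stays \emph{linear} in $\|\widehat\theta_t-\theta_t\|_2$ (rather than quadratic, which would kill the interpretation as an RMSE) — is the delicate part; everything else is the by-now-routine AMSGrad machinery from the cited adaptive-optimization literature.
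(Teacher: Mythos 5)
Your outline is sound and reaches the stated bound by a route that differs from the paper's in one structural choice: how the momentum $m_t$ is handled. The paper does not apply the descent lemma at $\theta_t$ along the actual update; instead it introduces the auxiliary (virtual) iterates $x_t = \frac{1}{1-\beta_1}\theta_t - \frac{\beta_1}{1-\beta_1}\theta_{t-1}$, for which the update collapses to $x_{t+1}-x_t = -\eta\Gamma_t g_t + \frac{\eta\beta_1}{1-\beta_1}(\Gamma_{t-1}-\Gamma_t)m_{t-1}$ with $\Gamma_t=\diag^{-\nicefrac{1}{2}}(\tilde v_t+\epsilon)$. Smoothness is then applied at $x_t$, the resulting expression is split into four terms, and the AMSGrad monotonicity is exploited through telescoping bounds $\sum_t\|\Gamma_{t-1}-\Gamma_t\|\le \epsilon^{-1/2}$ and $\sum_t\|\Gamma_{t-1}-\Gamma_t\|^2\le \epsilon^{-1}$, together with $\|m_t\|\le G$ and $\|\tilde v_t\|_\infty\le G^2$ with $G\le\sqrt N G_\infty$; the compression enters exactly as in your plan, linearly, via $\|\nabla f(\theta_t)-\nabla f(\widehat\theta_t)\|\le L\|\widehat\theta_t-\theta_t\|_2$ paired with $\|\Gamma_{t-1}\|\le\epsilon^{-1/2}$ and the gradient bound. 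Your alternative — descent lemma at $\theta_t$ plus unrolling $m_t$ inside the expectation — can be made to work, but it is heavier precisely at the spot you flag as delicate: after unrolling, each past $g_\tau$ is unbiased only for $\nabla f(\widehat\theta_\tau)$, so aligning it with $\nabla f(\widehat\theta_t)$ forces you to control $\|\widehat\theta_t-\widehat\theta_\tau\|$, which mixes accumulated step lengths with compression errors at two different times and generates additional (still linear) error terms and $O(\eta^2)$ leftovers. The virtual-iterate trick is what lets the paper avoid that bookkeeping entirely, since the momentum never appears paired with the current gradient except through the small preconditioner-difference correction. Both routes keep the compression error linear in $\|\widehat\theta_t-\theta_t\|_2$, use the same reference-point splitting, and calibrate $\eta=\min(\eta_0,1/\sqrt T)$ the same way, so the final constants ($C_0=\sqrt{G^2+\epsilon}\le\frac{\sqrt N}{2}C$) come out identically; the paper's approach buys a cleaner decomposition at the cost of introducing the auxiliary sequence, while yours stays on the actual iterates at the cost of a more intricate momentum argument that your sketch currently leaves at the level of "standard unrolling."
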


\paragraph{Discussion.}
Similar convergence analysis for Adam under compressed iterates $\widehat\theta_t$ was performed in the setup of convex online learning with bounded domain condition \citep{hou2018Qanalysis}, and in nonconvex optimization with restricted hyperparameter choices and slower rate in terms of constants and extra log-terms \citep{Chen2021qadam}.
We now interpret this bound, whose complete proof can be found in the Supplementary. 
Specifically, the bound shows the ergodic convergence of the gradients taken at compressed iterates, which is the strongest property shown even in the uncompressed case~\citep{zhou2018adaptive}. 
In turn, this term is bounded by 3 terms on the RHS. 
The second and third bounding terms are standard in the analysis of uncompressed Adam~\citep{zhou2018adaptive}, showing that our analysis is fairly tight. 
We focus our attention on the first term, whose key part is highlighted in blue: this term consists of absolute constants, multiplying the critical term 
$N \cdot \E\left[ \frac{1}{T}\sum_{t=1}^T\|\widehat{\theta}_t - \theta_t\|_2\right] $. 
Specifically, this term consists of the number of parameters $N$ times \emph{the average $\ell_2$ compression error over the model parameters throughout the execution}. 
Thus, this analysis suggests that the parameter efficiency of such models may depend multiplicatively on both the number of parameters, and the compression root-mean-square-error (RMSE) throughout the execution. 
Importantly, this bound is independent of the compression type.

\section{Findings}

\subsection{Finding 1: Gaussian RMSE Predicts  Representation Capacity}
\label{sec:mse}

\begin{figure}
    \centering
    \includegraphics[width=1.0\linewidth]{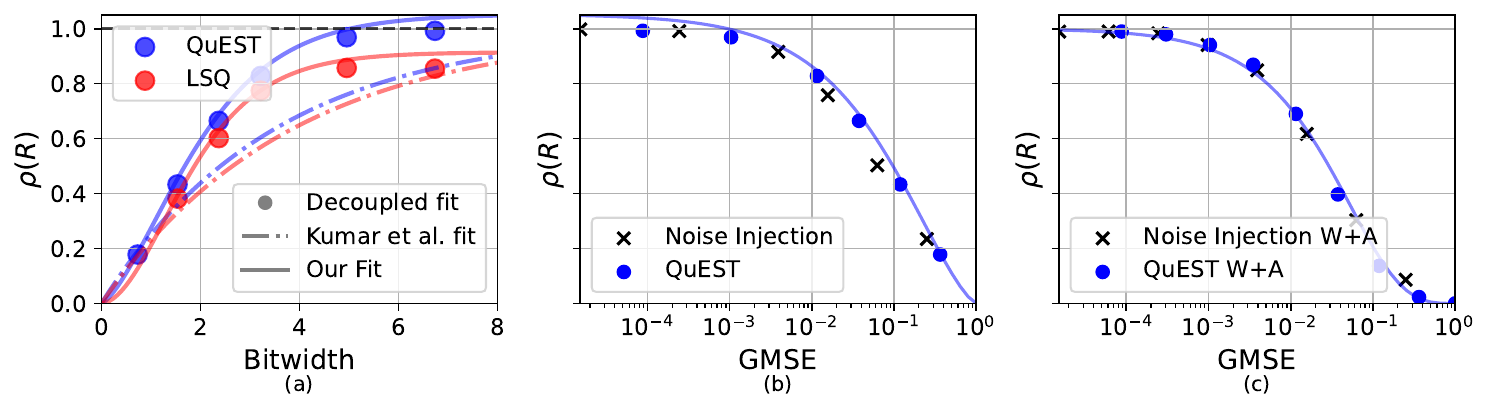}
    \caption{Comparison of $\rho$ fits for scaling law forms from Table~\ref{tab:precision_scaling}: \textbf{(a, left)} shows quantizations scaling laws, \textbf{(b, middle)} and \textbf{(c, right)} demonstrate the match between noise injection and QuEST quantization for weight-only and weights+activations quantization.}
    \label{fig:rho_quant_sparse_noise}
\end{figure}

Table~\ref{tab:precision_scaling} presents a number of scaling laws that model the same functions via different parametrizations. One can notice, that both the \textit{Sparsity} form of~\citet{frantar2023scalinglawssparselyconnectedfoundation} and the \textit{Quantization} form of~\citet{kumar2024scaling} can be reduced to the \textit{Decoupled} form of~\citet{frantar2025compressionscalinglawsunifyingsparsity} in the third row, by imposing additional constraints (e.g. $\text{eff}_P=1-e^{-P_w/\gamma_w}$ for quantization). Naturally, the Decoupled form can achieve lower fit error, but it does not provide any information about the interpretation of the capacity term, which we call $\rho(R)$, across different representations $R$. The \textit{Sparsity} form and the \textit{Quantization} form, on the other hand, feature intertwining and interpretable parameters. For simplicity, we first focus on the \textit{Quantization} form for now.

\paragraph{The Functional Form.} \citet{kumar2024scaling} choose the functional form $\rho(P_w)=1-e^{-P_w/\gamma_w}$ to model quantization efficiency.
By contrast, we propose a different form to model $\rho(R)$:
\begin{equation}
    \widetilde{\rho}(\id{GMSE(R)}) = L\cdot\tanh(F\cdot \log_{1/4}(\id{GMSE(R)}))^C,
\end{equation}

which depends only on the representation $R$'s Gaussian-MSE fit, denoted by $\id{GMSE(R)}$, and on the scalars $L$, $F$, and $C$, detailed below. 
The $\id{GMSE(R)}$ is easily computable for any representation, and allows us to bypass the dependency on representation-specific parametrization, such as bit-width or sparsity. 
Specifically, we fit the scalar parameters for each compression type, e.g. scalar quantization, and then re-use these parameters while varying $\id{GMSE(R)}$ w.r.t. compression parameters, e.g. bit-width. 
The scalar parameters $L$, $F$, and $C$  allow us to accurately model observed effects such as:

\begin{itemize}[topsep=0pt,itemsep=2pt,leftmargin=1.5em]
    \item \textbf{Imperfect convergence in high-precision:} While modern QAT algorithms such as QuEST reach efficiency $\rho = 1$ for low quantization error, older algorithms such as  LSQ~\citep{esser2019learned}  (Figure~\ref{fig:rho_quant_sparse_noise} (a)), have an efficiency limit strictly below $1$, since for instance its gradient estimator introduces consistent bias. The factor $L$, defaulting to $1$ for  saturating representations, allows us to model this imperfection. 
    \item \textbf{Various low-precision curvature:} As seen in Figures~\ref{fig:rho_quant_sparse_noise} (b) and (c), different representations behave differently around $\id{GMSE}=1$, with some have noticeably higher curvature (``breakdown''). From Figure~\ref{fig:rho_quant_sparse_noise} (a), one can see how that region disproportionally affects the law of \citet{kumar2024scaling}, leading to a very poor fit at higher bitwidths. The factor $C$, closer to $1$ for representations ``more linear'' around $\id{GMSE}=1$, allows us to more accurately model $\rho(R)$.
\end{itemize}

\paragraph{Quality of Fit.} Table~\ref{tab:precision_scaling} shows that our approach leads similar or better quality-of-fit relative to prior laws, covering both scalar quantization and sparsity, while Figure~\ref{fig:rho_quant_sparse_noise} shows $\rho(R)$ alignment between scaling law forms, compared to \citet{kumar2024scaling}, for the QuEST and LSQ quantizers. Again, our approach provides significantly better fit. 
In Figure~\ref{fig:vq_sparse_wa_factorization}(a), we show that our method can also provide a good fit for models trained with vector-quantized (VQ) weights, using the projection method of~\citep{malinovskii-etal-2025-higgs}, for lattice dimensions $2$ and $4$. This shows both the versatility of our approach, and the necessity of the $L$ term, since higher-dimensional VQ appears to have clear sub-unit saturation due to higher bias. We provide further examples in Section~\ref{sec:multiplicative}.

\subsection{Finding 2: Noise Injection as a Scaling Law Predictor}
\label{sec:noise-injection}

Next, we turn our scaling law on its head, and ask: what if we plug the \emph{optimal} achievable $\id{GMSE}$ for a certain bit-width into the scaling law? In that case, the scaling law should allow us to compute a \emph{lower bound} on the achievable parameter efficiency given a certain type of representation. In turn, we can find out how close existing quantization- or sparsity-aware training techniques, or numerical formats, are to the information-theoretic lower bound for that specific representation. 

Figure~\ref{fig:rho_quant_sparse_noise} (b) illustrates the ``optimality gap'' for the QuEST algorithm for scalar weight-only quantization across bit-widths, suggesting that this approach is fairly close to optimal. 
In Figure~\ref{fig:rho_quant_sparse_noise} (c), we compare the fit between actual runs of this QAT algorithm across bit-widths, and the predicted values via noise injection~\citep{baskin2021uniq} (plugging in the equivalent $\id{GMSE}$) into the scaling law, showing a near-perfect fit.

\subsection{Finding 3: Representation Capacity Is Multiplicative Across Compression Types}
\label{sec:multiplicative}

In prior work,~\citep{kumar2024scaling} have claimed that, for their formulation of the law, the representation capacity factorizes independently for quantization of weights and activations. Our experimental findings  extend this result, showing that representation capacity, $\rho(R)$, also factorizes naturally across a wide range of compression approaches, whether for the same tensor (sparse-and-quantized weights) or for different state tensors (sparse weights and sparse activations). 
We follow the experimental setup from Section~\ref{sec:experimental-setup}, training models with sparse weights and activations, or sparse-quantized weights, or both. We fit a scaling law in the 100 toks/param regime. 
We show that representation capacity factorizes for the following scenarios:

\begin{enumerate}
[topsep=0pt,itemsep=2pt,leftmargin=1.5em]

\item 
\textbf{Sparse weights and activations:} 
For sparsity, independently applied to weight and activations, 
\begin{equation}
\label{eq:wa_sparsity}
    \rho(R_{s_w, s_a}) = \rho(R_{s_w}) \cdot \rho(R_{s_a}).
\end{equation}

 We summarize the fitted values of $\rho(R)$ levels in a matrix $M$ (Figure~\ref{fig:vq_sparse_wa_factorization}(b)), where each entry corresponds to the fitted efficiency for a model trained with a specific sparsity configuration. Remarkably, the matrix can be accurately approximated by a rank-1 outer product of the first column $M_{0,:}$ (weight-only) and the top row $M{:,0}$ (activations-only) elements, i.e. $M \approx M_{0,:} \otimes M_{0,:}$. 
The resulting parameter efficiencies closely match the product of efficiencies obtained for runs with weight-only and activations-only configurations.

    \item 
        \textbf{Sparse and quantized weights:} 
Given a weight sparsity level $s$ combined with $q$-bit quantization, we claim that the representation capacity can be represented as the product:
$\rho(R_{q, s}) = \rho(R_{q}) \cdot \rho(R_{s})$. We report the results for different sparsity levels and bit width in Figure~\ref{fig:sparse-quant-factorisation}. Similarly, the matrix $\rho(R)$ factorizes into the outer product of marginal vectors for quantization-only and sparsity-only representation. 
Apart from extreme quantization to 2-bit precision, the approximation maintains the error of order of $10^{-2}$.

\begin{figure}[t]
    \centering
        \begin{subfigure}[b]{0.26\linewidth} 
        \centering
        \includegraphics[width=\linewidth]{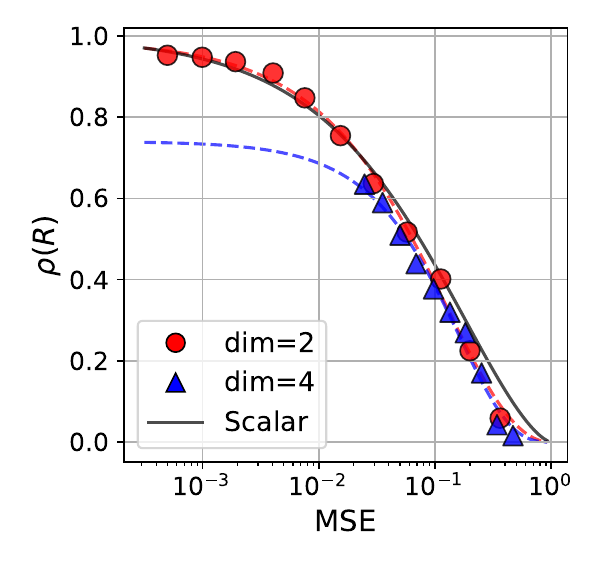}
        \caption{ }
    \end{subfigure}   
    \hfill
    \begin{subfigure}[b]{0.72\linewidth}
            \centering
    \includegraphics[width=\linewidth]{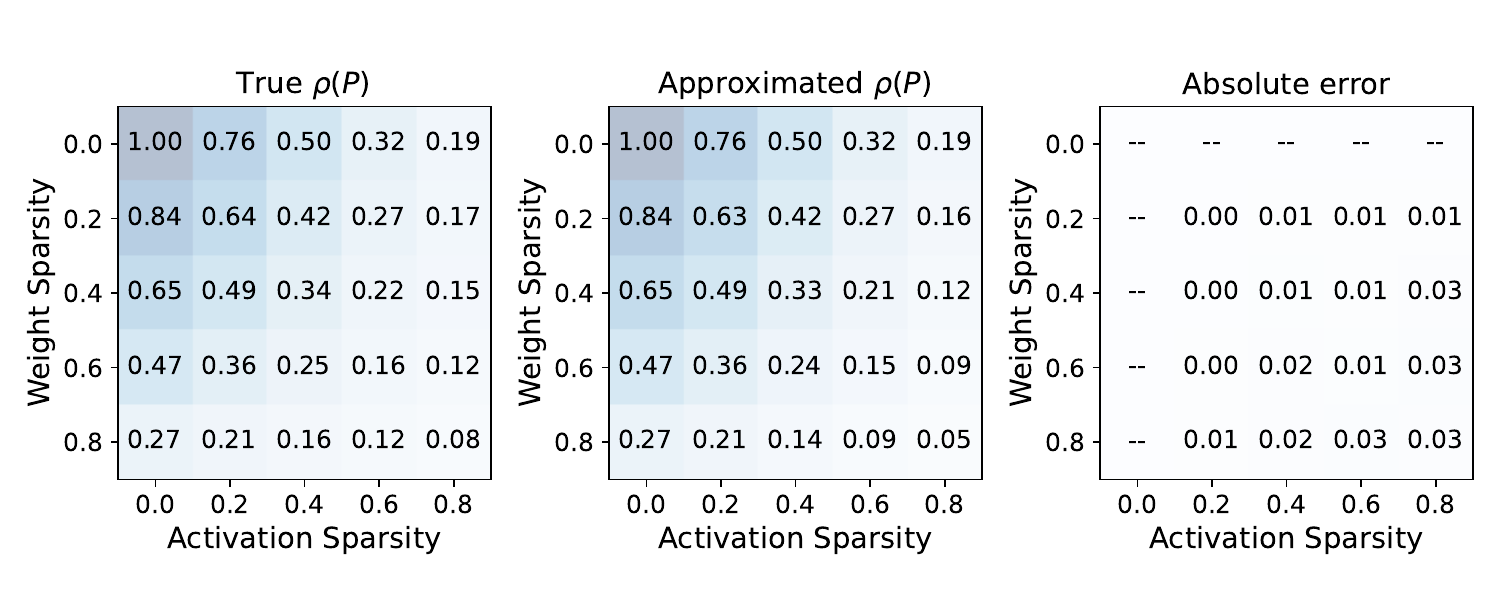}
    \caption{}
    \end{subfigure}
            \caption{ \textbf{(a)} Scaling law for 2- and 4-dimensional  vector quantization. \textbf{(b)} Representation capacity across  weight and activation sparsity levels: baseline, factorized prediction, and relative errors. Note the low errors for the factorized predictions, with slight increases at the larger sparsity levels.  }
    \label{fig:vq_sparse_wa_factorization}
\end{figure}
  
\item 
\textbf{Sparse and quantized weights, and quantized activations:} 
Finally, we observe that factorization extends to quantization of activations as well. In supplementary experiments, we apply quantization to activation tensors alongside weight sparsity and quantization. Our results indicate that the representation capacity with weight sparsity $s_w$ and quantization bitwidth $q_w$, and activation sparsity $q_a$ follows
   $ \label{eq:sparsity-wa-quant}
    \rho(R_{q_w, s_w, q_a}) = \rho(R_{q_w}) \cdot \rho(R_{s_w}) \cdot \rho(R_{q_a})$.

\end{enumerate}

The multiplicative property of the law aligns with the concept of representation capacity. When combining compression types $R_1$ and $R_2$, the overall capacity naturally satisfies 
    $\rho(R_1, R_2) = \rho(R_1) \cdot \rho(R_2) \leq \rho(R_i) \leq 1 \; \text{ for } i\in\{1,2\}$,
reflecting that applying compression methods jointly should further reduce the representation capacity compared to each method applied individually.

This result allows for low-cost comparison across compression comparison. Moreover, it facilitates compression hyperparameter tuning and thus predictable model training in a compressed regime.  

\section{Applications}

\subsection{Application 1: Comparing Compressed Numerical Formats}

\begin{figure}
    \centering
    \includegraphics[width=1.0\linewidth]{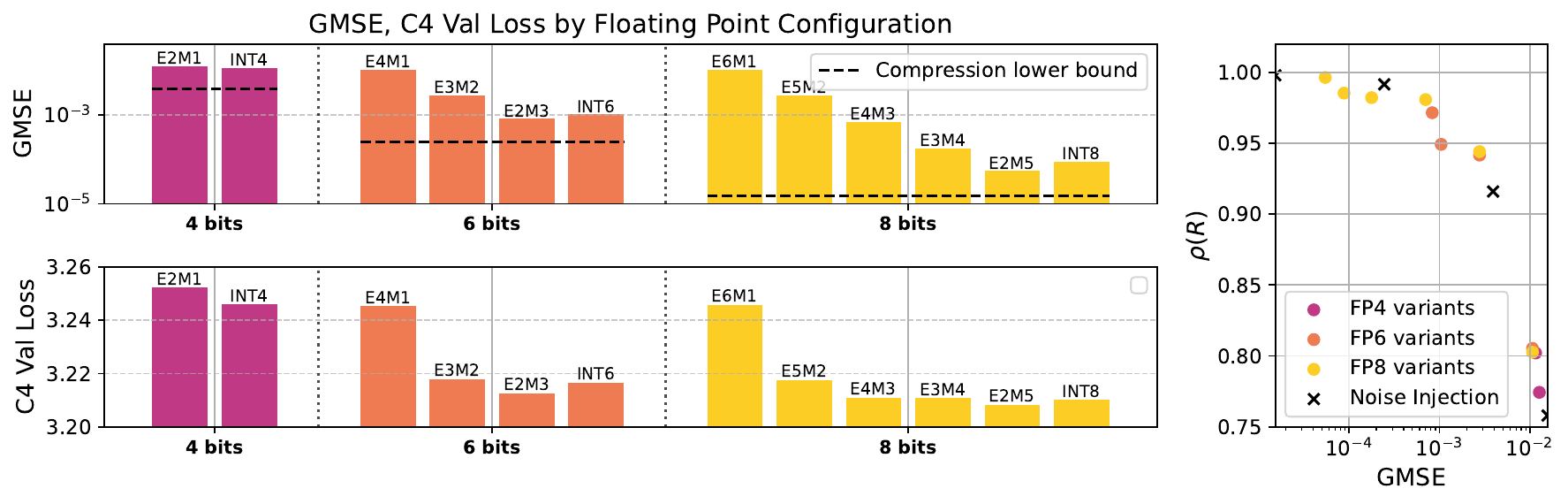}
    \caption{Comparison of floating point and integer data-types in terms of $\id{GMSE}$, and C4 Validation Loss when trained using the corresponding formats via QuEST, and the resulting capacity $\rho(R)$. Observe the high correlation between ranking in terms of $\id{GMSE}$ (top), and Val. Loss (bottom).}
    \label{fig:fp_comparison}
\end{figure}

\begin{figure}[b]
    \centering
    \begin{subfigure}[b]{0.53\linewidth}
        \centering
        \includegraphics[width=\linewidth]{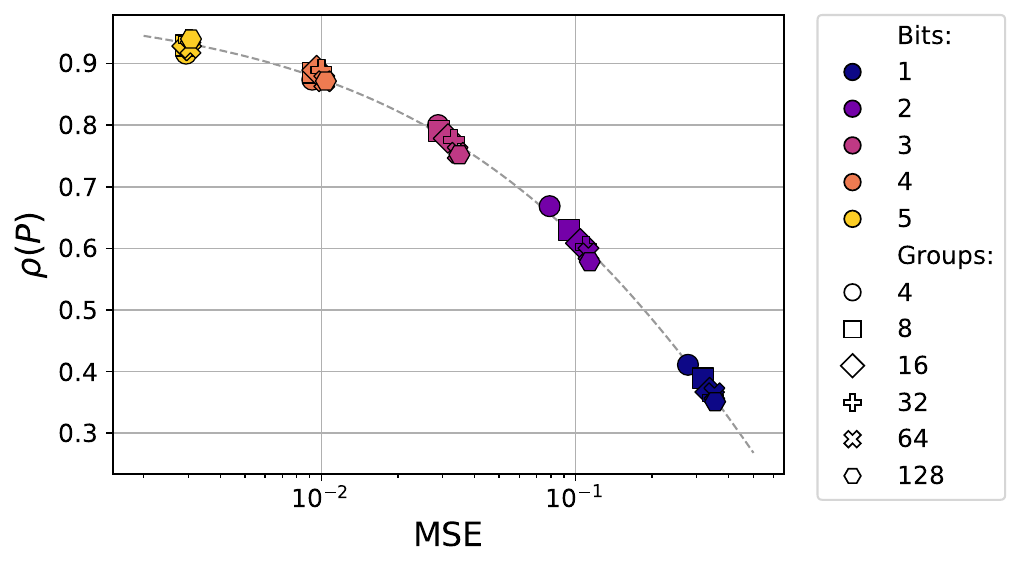}
        % \caption{}
        \label{fig:groups}
    \end{subfigure}
    \hfill
    \begin{subfigure}[b]{0.45\linewidth}
        \centering
        \includegraphics[width=\linewidth]{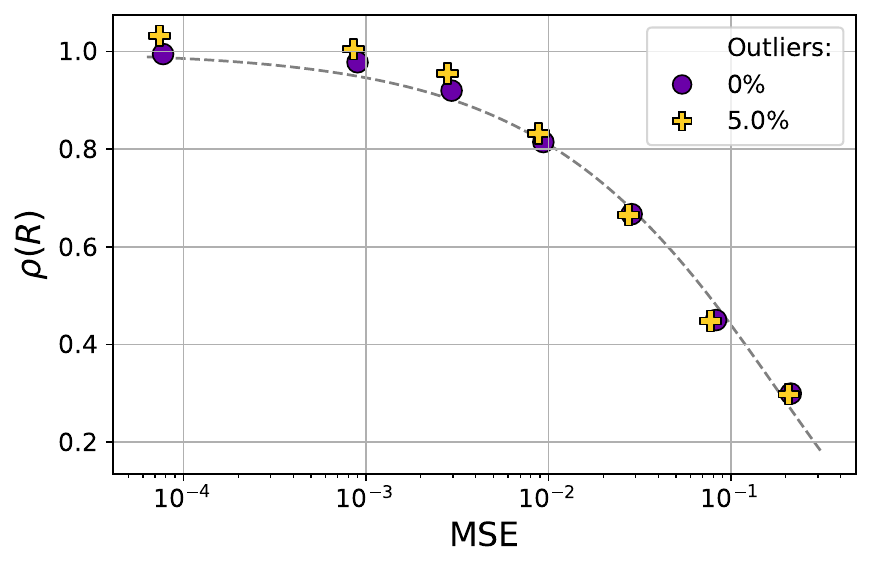}
        % \caption{Outliers}
        \label{fig:sparse-only}
    \end{subfigure}
    \caption{Representation capacity $\rho(R)$ versus MSE for
\textbf{(a)} group-wise quantization, with markers indicate group counts (color encodes quantization bitwidth), and
\textbf{(b)} outlier-aware quantization. }
    \label{fig:grouping-tails}
\end{figure}

\paragraph{Practical Formats.} The scaling law for compressed representations enables systematic comparison of numerical formats used in quantization, such as INT8, INT4, FP4, or custom low-precision representations, based just on their $\id{GMSE}$, which can be determined via fast Monte Carlo algorithms. Thus, it provides a clear guidance on which low-precision format delivers the best performance for given resource constraints. Figure~\ref{fig:fp_comparison} illustrates this for a number of floating-point and integer data-types. 
Specifically, we observe a direct correlation between the ranking of $\id{GMSE}$ values (top) and the actual C4 validation loss obtained in actual experiments (bottom). This suggests that our $\id{GMSE}$ metric is an accurate predictor of compressed pre-training performance. 
For instance, it suggests that switching to FP4 (E2M1) will not bring gains relative to INT4 training, and that both formats are close to the theoretical lower bound at 4 bits.

\paragraph{The Impact of Parameter Grouping and Outlier Preservation.} 
A related question regarding formats is whether more complex approaches, such as group-wise quantization, or outlier preservation in higher precision, can disrupt the scaling law. 
We examine this in Figure~\ref{fig:grouping-tails}, which shows that preserving no outliers (0 \%) lies on the Pareto-optimal boundary: higher outlier ratios achieve a worse trade-off between the MSE and the representation capacity $\rho(R)$.
This suggests that, for pre-training it is more effective to allocate bits to encoding the values distribution rather than outlier preservation or careful grouping.
This further demonstrates that the proposed RMSE dependency is a general property and remains valid even under diverse structured compression techniques.

\paragraph{Compositionality.} An immediate practical application of the multiplicative behavior of the law (Section~\ref{sec:multiplicative}) is the ability to estimate the model's performance in advance for arbitrary  compression configuration. Given the individual efficiencies of different compression methods, such as quantization or sparsity, applied to weights or activations, one can predict the combined effect without spending additional compute for training. 

\subsection{Application 2: Increasing the Capacity of Sparse Training}

\textbf{The Sparse Training Problem.} For our second application, we investigate implications of the RMSE law to maximize the capacity of a sparse representation during training. 
Specifically, standard sparse training methods such as gradual magnitude pruning (GMP)~\citep{zhu2017prunepruneexploringefficacy, frantar2023scalinglawssparselyconnectedfoundation} compute a forward sparsity mask, which we denote by $M_{FW}$ during the forward pass based on the absolute-magnitude Top-K operation applied to the model parameters $\theta$ with respect to a target sparsity. Then, a gradient $\nabla L (\id{TopK}(\theta))$ is taken w.r.t. the the sparsified weights. Standard baselines, such as the ones we use for sparse training, re-use the forward sparsity mask for the backward, preventing the pruned weights from receiving any gradient. We are interested in heuristics to improve the parameter efficiency of this standard approach, increasing capacity at the same sparsity level.

\begin{figure}[t]
    \centering
    \subfloat[]{\includegraphics[width=0.25\textwidth]{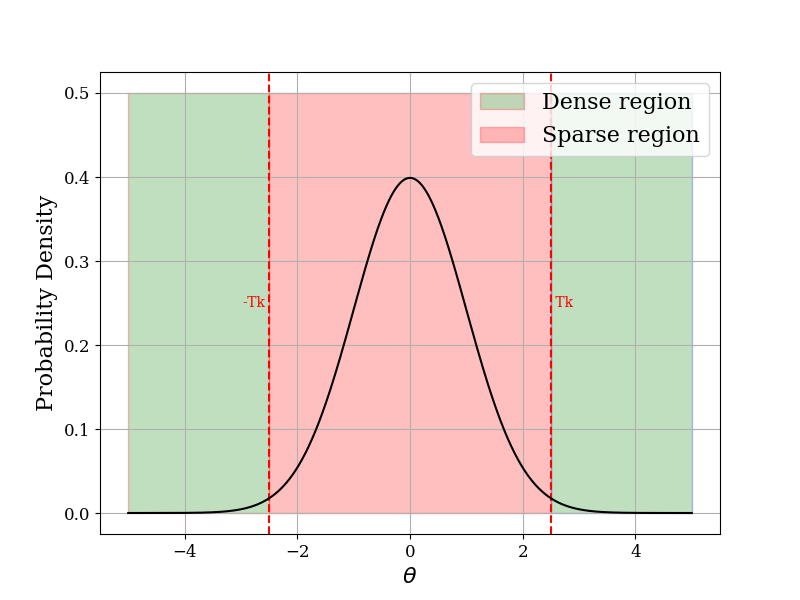}\label{fig:backward-heuristic-fw}}
    \hfill
    \subfloat[]{\includegraphics[width=0.25\textwidth]{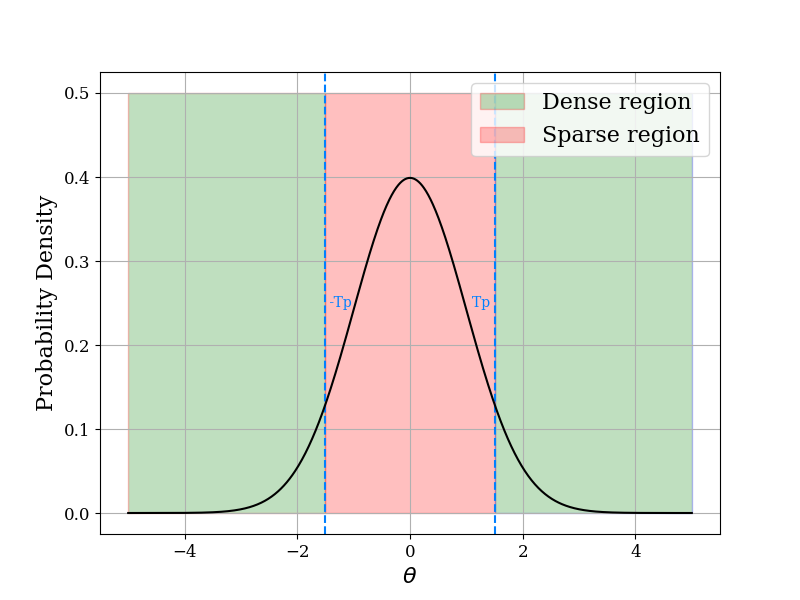}\label{fig:backward-heuristic-rms}}
    \hfill
    \subfloat[]{\includegraphics[width=0.25\textwidth]{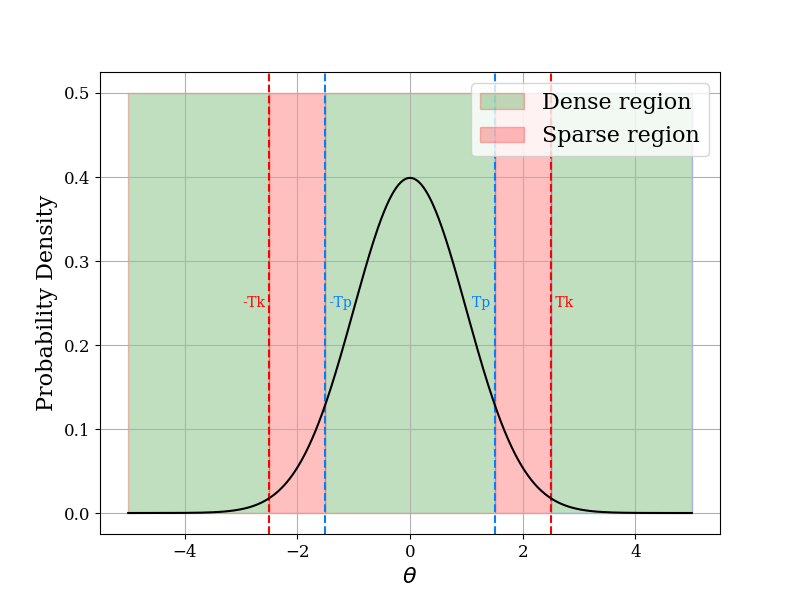}\label{fig:backward-heuristic-banded-rms}}
    \hfill
    \subfloat[]{\includegraphics[width=0.24\textwidth]{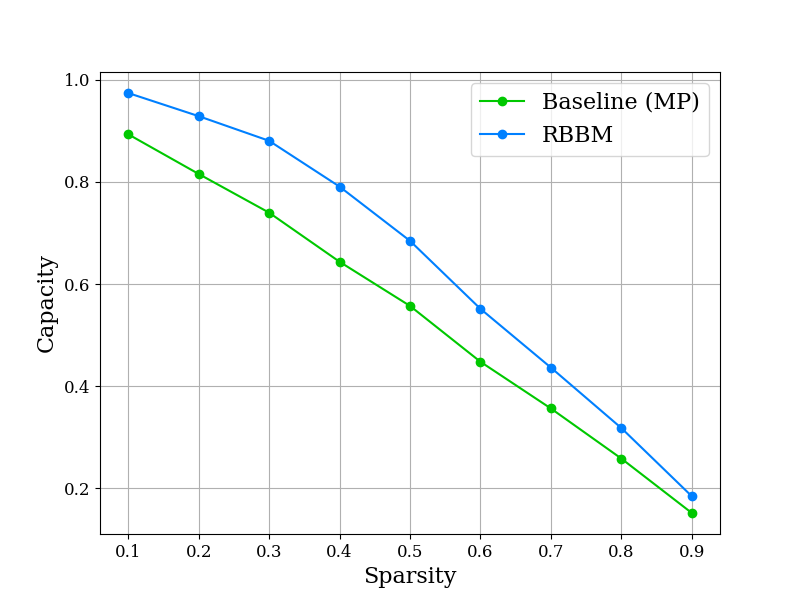}\label{fig:backward-heuristic-efficiency}}
    \caption{Backward heuristics: (a) forward mask $M_{FW}$ determined by the $T_k$ threshold, (b) backward mask determined by threshold $T_p$, (c) banded backward mask determined by both $T_k$ and $T_p$, (d) capacity plot  compared to the Magnitude Pruning baseline with $M_{BW} = M_{FW}$.}
\end{figure}

\paragraph{RMSE-Banded Gradient Masking.}
For this, we follow the RMSE law and align the parameters $\theta \in \mathbb{R}^N$ with the standard normal distribution by dividing $\theta$ by its root mean square $RMS(\theta) = \sqrt{\frac{1}{N} \sum_{i=1}^N \theta_i^2}$, which results in $||\theta / RMS(\theta)||_2^2 = N$. We allow the user to provide a median deviation parameter $p \in (0, 0.5)$, which determines the threshold for the backward mask $T_p = RMS(\theta) \cdot ppf(0.5 + p)$, where $ppf$ is the inverse cumulative distribution function of the standard Normal distribution. The multiplication with  $RMS(x)$ ``converts'' the threshold for the standard Normal distribution to the threshold for the vector $\theta$. As a result, $M_{BW} = |\theta| > T_p$. 

Effectively, our approach, which we call RMSE-Banded Backward Masking (RBBM), sets a backward mask $M_{BW}$ that may be different than the TopK mask for the forward, whose sparsity is controllable via the parameter $p$. To address the fact that it may not allow gradient flow for small parameter values, we allow gradients to flow for the smallest and largest parameters, and create a band between $T_p$ and the TopK threshold $T_k$, where we do not allow gradient flow. Let $m=min(T_p, T_k)$ and $M=max(T_p, T_k)$ and define $M_{BW} = (|\theta| < m) \lor (|\theta| > M)$. Since we do not control the relationship between $T_p$ and $T_k$, we need to ensure that the band is defined correctly. Concretely, the values $\theta_i < m$ and values $\theta_i > M$ will receive gradients, while the values $\theta_i \in [m, M]$ will not.

To illustrate, in Figure~\ref{fig:backward-heuristic-fw} we show the structure of forward mask $M_{FW}$, were the red region corresponds to values $|\theta_i| < T_k$ that will not receive any gradient, while the green region corresponds to the values $|\theta_i| \geq T_k$ which will receive gradient. The top-k threshold $T_k$ is fixed. In Figure~\ref{fig:backward-heuristic-rms}, we have a similar behavior for the backward mask $M_{BW}$ determined by the threshold $T_p$, which is now user-controlled via the median deviation parameter $p$. In Figure~\ref{fig:backward-heuristic-banded-rms} we show an example for $T_p < T_k$, where we obtain a banded-mask: the values $|\theta_i| \in [T_p, T_k]$ will not receive any gradient (red region), while the other values will receive gradient (green region). The band width can be controlled via the parameter $p \in (0, 0.5)$. When $p$ is close to zero, the $T_p$ value will decrease, having the effect of increasing the width of the red band, where the corresponding weights do not get gradient. When $p = 0$, the value of $T_p$ will be equal to the median and this will be equivalent to the baseline (e.g. $M_{BW} = M_{FW}$) illustrated in Figure~\ref{fig:backward-heuristic-fw}.

\paragraph{Results.} We apply RBBM for sparse training in our pretraining scenario, for the 30M parameter Llama model, using our training setup from Sec.~\ref{sec:experimental-setup}, and for unstructured sparsities between 10\% and 90\%. We compute the capacity of the sparse representation. The results for our RMSE-based heuristic and the standard sparse training baseline (Magnitude Pruning) are provided in Figure~\ref{fig:backward-heuristic-efficiency}. The results show that our RMSE-based approach enables consistently higher capacity than the baseline.

\section{Related Work}

We focus on studies that extended classical scaling laws~\citep{kaplan2020scalinglawsneurallanguage, hoffmann2022trainingcomputeoptimallargelanguage} to model compression. 
\citet{frantar2023scalinglawssparselyconnectedfoundation} presented the first scaling law for \emph{weight-sparse Transformers}, across vision and language and unstructured and semi-structured sparsity. 
Earlier work by \citet{clark2022unified} studied mixture-of-experts sparsity, deriving scaling laws in terms of total parameters and compute per token, reinforcing the idea that only effective parameters govern scaling.  

A recent breakthrough  by~\citet{kumar2024scaling} introduced scaling laws that incorporate numerical quantization during training and inference, showing that, as for sparsity, a model trained in low precision behaves like a smaller high-precision model. They also apply their approach to post-training quantization (PTQ), showing that PTQ quality \emph{worsens} as training data increases. For training-time quantization, their laws suggest that using lower precision allows training larger models on the same compute budget. 
Relative to their pioneering work, we bring the following improvements. 
First, we investigate a different and arguably simpler scaling law, showing that it yields a considerably better fit for quantization itself (see Table~\ref{tab:precision_scaling}).  
Second, our key focus is different, we provide a first interpretation of the notion of representation ``capacity'', together with a  theoretical justification, and ample experimental validation. 
Finally, we validate the factorization property posited by~\citet{kumar2024scaling}, as well as extensions to hybrid formats.  
Follow-up work by~\citet{sun2025scaling} examines scaling laws for floating-point (FP) formats, finding that the law of~\citet{kumar2024scaling} does not provide a good fit in this case, and investigates an extension of the law via additional parametrization. 

Preliminary work by~\citet{frantar2025compressionscalinglawsunifyingsparsity} proposed the single-parameter scaling law on which we build, and showed that it can be applied to instances of weight quantization and sparsity, by directly fitting the efficiency parameter. By contrast, we  identify a \emph{general} law, in the sense that the same parametric form can transfer between compression types, to hybrid sparse-quantized formats, as well as to instances where both weights and activations are compressed. More interestingly, we equate the representation capacity factor in the law with a natural notion of representation capacity, show that the law factorizes across representations. 
In concurrent work, ParetoQ~\citep{liu2025paretoq} aimed to unify the fragmented landscape of LLM quantization by systematically evaluating the interplay between training strategy, quantization function design, and bit selection. Our results complement their findings: for instance,  we obtain that, for the architectures we consider, 2-bit weight-only quantization is Pareto optimal.

\section{Discussion and Limitations}
\label{sec:discussion}

Our study introduces \emph{representation capacity}---roughly defined as a simple monotone transform of the Gaussian MSE---as a unified metric when training compressed models across various representations. Capacity enables format comparisons without retraining or exhaustive grid searches, so that future hardware designers can expose any format whose capacity $\rho$ dominates the Pareto frontier, confident that software will exploit it optimally. Moreover, our law \emph{factorizes}, further simplifying the search for the ``optimal'' training format. 

A few caveats remain. First, in line with prior work in this area, our experiments are limited to decoder-only Llama-style architectures trained on C4 in the data-rich regime (100 toks/param); we plan to extend this at larger scale. Second, the law may need specific fits for ultra-low precision (e.g.\ 2-bit or ternary formats) and for vector-quantization codebooks below 8 entries, suggesting second-order effects may need to be taken into account. Third, while our theoretical evidence uses standard assumptions, it could be extended to more complex representation types.

\bibliographystyle{natbib}
\bibliography{references}

\begin{thebibliography}{}

\bibitem[Baskin {\em et~al.}(2021)Baskin, Liss, Schwartz, Zheltonozhskii, Giryes, Bronstein, and Mendelson]{baskin2021uniq}
Baskin, C., Liss, N., Schwartz, E., Zheltonozhskii, E., Giryes, R., Bronstein, A.~M., and Mendelson, A. (2021).
\newblock Uniq: Uniform noise injection for non-uniform quantization of neural networks.
\newblock {\em ACM Transactions on Computer Systems (TOCS)\/}, {\bf 37}(1-4), 1--15.

\bibitem[Bengio {\em et~al.}(2013)Bengio, Léonard, and Courville]{STE}
Bengio, Y., Léonard, N., and Courville, A. (2013).
\newblock Estimating or propagating gradients through stochastic neurons for conditional computation.
\newblock {\em arXiv preprint arXiv:1308.3432\/}.

\bibitem[Chen {\em et~al.}(2021)Chen, Shen, Huang, and Liu]{Chen2021qadam}
Chen, C., Shen, L., Huang, H., and Liu, W. (2021).
\newblock {Quantized Adam with Error Feedback}.
\newblock {\em arXiv preprint arXiv:2004.14180\/}.

\bibitem[Chen {\em et~al.}(2019)Chen, Liu, Sun, and Hong]{chen2019AdamType}
Chen, X., Liu, S., Sun, R., and Hong, M. (2019).
\newblock {On the Convergence of A Class of Adam-Type Algorithms for Non-Convex Optimization}.
\newblock {\em International Conference on Learning Representations\/}.

\bibitem[Clark {\em et~al.}(2022)Clark, de~Las~Casas, Guy, Mensch, Paganini, Hoffmann, Damoc, Hechtman, Cai, Borgeaud, {\em et~al.}]{clark2022unified}
Clark, A., de~Las~Casas, D., Guy, A., Mensch, A., Paganini, M., Hoffmann, J., Damoc, B., Hechtman, B., Cai, T., Borgeaud, S., {\em et~al.} (2022).
\newblock Unified scaling laws for routed language models.
\newblock {\em International Conference on Machine Learning\/}, pages 4057--4086.

\bibitem[D{\'e}fossez {\em et~al.}(2022)D{\'e}fossez, Bottou, Bach, and Usunier]{defossez2022a}
D{\'e}fossez, A., Bottou, L., Bach, F., and Usunier, N. (2022).
\newblock A simple convergence proof of adam and adagrad.
\newblock {\em Transactions on Machine Learning Research\/}.

\bibitem[Esser {\em et~al.}(2019)Esser, McKinstry, Bablani, Appuswamy, and Modha]{esser2019learned}
Esser, S.~K., McKinstry, J.~L., Bablani, D., Appuswamy, R., and Modha, D.~S. (2019).
\newblock Learned step size quantization.
\newblock {\em arXiv preprint arXiv:1902.08153\/}.

\bibitem[Frantar {\em et~al.}(2022)Frantar, Ashkboos, Hoefler, and Alistarh]{frantar2022gptq}
Frantar, E., Ashkboos, S., Hoefler, T., and Alistarh, D. (2022).
\newblock Gptq: Accurate post-training quantization for generative pre-trained transformers.
\newblock {\em arXiv preprint arXiv:2210.17323\/}.

\bibitem[Frantar {\em et~al.}(2024)Frantar, Ruiz, Houlsby, Alistarh, and Evci]{frantar2023scalinglawssparselyconnectedfoundation}
Frantar, E., Ruiz, C.~R., Houlsby, N., Alistarh, D., and Evci, U. (2024).
\newblock Scaling laws for sparsely-connected foundation models.
\newblock In {\em International Conference on Learning Representations\/}.

\bibitem[Frantar {\em et~al.}(2025)Frantar, Evci, Park, Houlsby, and Alistarh]{frantar2025compressionscalinglawsunifyingsparsity}
Frantar, E., Evci, U., Park, W., Houlsby, N., and Alistarh, D. (2025).
\newblock Compression scaling laws:unifying sparsity and quantization.

\bibitem[Gholami {\em et~al.}(2022)Gholami, Kim, Dong, Yao, Mahoney, and Keutzer]{gholami2022survey}
Gholami, A., Kim, S., Dong, Z., Yao, Z., Mahoney, M.~W., and Keutzer, K. (2022).
\newblock A survey of quantization methods for efficient neural network inference.
\newblock In {\em Low-power computer vision\/}, pages 291--326. Chapman and Hall/CRC.

\bibitem[Gibney {\em et~al.}(2022)Gibney {\em et~al.}]{gibney2022shrink}
Gibney, E. {\em et~al.} (2022).
\newblock How to shrink ai’s ballooning carbon footprint.
\newblock {\em Nature\/}, {\bf 607}(7920), 648--648.

\bibitem[Harma {\em et~al.}(2025)Harma, Chakraborty, Kostenok, Mishin, Ha, Falsafi, Jaggi, Liu, Oh, Subramanian, and Yazdanbakhsh]{burcu}
Harma, S.~B., Chakraborty, A., Kostenok, E., Mishin, D., Ha, D., Falsafi, B., Jaggi, M., Liu, M., Oh, Y., Subramanian, S., and Yazdanbakhsh, A. (2025).
\newblock Effective interplay between sparsity and quantization: From theory to practice.
\newblock In {\em International Conference on Learning Representations\/}.
\newblock arXiv:2405.20935.

\bibitem[Hoefler {\em et~al.}(2021)Hoefler, Alistarh, Ben-Nun, Dryden, and Peste]{hoefler2021sparsity}
Hoefler, T., Alistarh, D., Ben-Nun, T., Dryden, N., and Peste, A. (2021).
\newblock Sparsity in deep learning: Pruning and growth for efficient inference and training in neural networks.
\newblock {\em Journal of Machine Learning Research\/}, {\bf 22}(241), 1--124.

\bibitem[Hoffmann {\em et~al.}(2024)Hoffmann, Borgeaud, Mensch, Buchatskaya, Cai, Rutherford, de~Las~Casas, Hendricks, Welbl, Clark, Hennigan, Noland, Millican, van~den Driessche, Damoc, Guy, Osindero, Simonyan, Elsen, Vinyals, Rae, and Sifre]{hoffmann2022trainingcomputeoptimallargelanguage}
Hoffmann, J., Borgeaud, S., Mensch, A., Buchatskaya, E., Cai, T., Rutherford, E., de~Las~Casas, D., Hendricks, L.~A., Welbl, J., Clark, A., Hennigan, T., Noland, E., Millican, K., van~den Driessche, G., Damoc, B., Guy, A., Osindero, S., Simonyan, K., Elsen, E., Vinyals, O., Rae, J.~W., and Sifre, L. (2024).
\newblock Training compute-optimal large language models.
\newblock In {\em Advances in Neural Information Processing Systems\/}.

\bibitem[Hou {\em et~al.}(2019)Hou, Zhang, and Kwok]{hou2018Qanalysis}
Hou, L., Zhang, R., and Kwok, J.~T. (2019).
\newblock Analysis of quantized models.
\newblock In {\em International Conference on Learning Representations\/}.

\bibitem[Kaplan {\em et~al.}(2020)Kaplan, McCandlish, Henighan, Brown, Chess, Child, Gray, Radford, Wu, and Amodei]{kaplan2020scalinglawsneurallanguage}
Kaplan, J., McCandlish, S., Henighan, T., Brown, T.~B., Chess, B., Child, R., Gray, S., Radford, A., Wu, J., and Amodei, D. (2020).
\newblock Scaling laws for neural language models.
\newblock {\em arXiv preprint arXiv:2001.08361\/}.

\bibitem[Kingma and Ba(2015)Kingma and Ba]{kingma2015adam}
Kingma, D.~P. and Ba, J. (2015).
\newblock Adam: A method for stochastic optimization.
\newblock {\em International Conference on Learning Representations (ICLR)\/}.
\newblock arXiv:1412.6980.

\bibitem[Kumar {\em et~al.}(2024)Kumar, Ankner, Spector, Bordelon, Muennighoff, Paul, Pehlevan, R{\'e}, and Raghunathan]{kumar2024scaling}
Kumar, T., Ankner, Z., Spector, B.~F., Bordelon, B., Muennighoff, N., Paul, M., Pehlevan, C., R{\'e}, C., and Raghunathan, A. (2024).
\newblock Scaling laws for precision.
\newblock {\em arXiv preprint arXiv:2411.04330\/}.

\bibitem[Li and Li(2022)Li and Li]{li2022efadaptive}
Li, X. and Li, P. (2022).
\newblock {Analysis of Error Feedback in Federated Non-Convex Optimization with Biased Compression}.
\newblock {\em arXiv preprint arXiv:2211.14292\/}.

\bibitem[Liu {\em et~al.}(2025)Liu, Zhao, Huang, Chen, Zhang, Zhao, Roy, Jin, Xiong, Shi, Xiao, Tian, Soran, Krishnamoorthi, Blankevoort, and Chandra]{liu2025paretoq}
Liu, Z., Zhao, C., Huang, H., Chen, S., Zhang, J., Zhao, J., Roy, S., Jin, L., Xiong, Y., Shi, Y., Xiao, L., Tian, Y., Soran, B., Krishnamoorthi, R., Blankevoort, T., and Chandra, V. (2025).
\newblock Paretoq: Scaling laws in extremely low-bit llm quantization.
\newblock {\em arXiv preprint arXiv:2502.02631\/}.

\bibitem[Loshchilov and Hutter(2019)Loshchilov and Hutter]{loshchilov2018decoupled}
Loshchilov, I. and Hutter, F. (2019).
\newblock Decoupled weight decay regularization.
\newblock In {\em International Conference on Learning Representations\/}.

\bibitem[Malinovskii {\em et~al.}(2025)Malinovskii, Panferov, Ilin, Guo, Richt{\'a}rik, and Alistarh]{malinovskii-etal-2025-higgs}
Malinovskii, V., Panferov, A., Ilin, I., Guo, H., Richt{\'a}rik, P., and Alistarh, D. (2025).
\newblock {HIGGS}: Pushing the limits of large language model quantization via the linearity theorem.
\newblock In L.~Chiruzzo, A.~Ritter, and L.~Wang, editors, {\em Proceedings of the 2025 Conference of the Nations of the Americas Chapter of the Association for Computational Linguistics: Human Language Technologies (Volume 1: Long Papers)\/}, pages 10857--10886, Albuquerque, New Mexico. Association for Computational Linguistics.

\bibitem[Modoranu {\em et~al.}(2024)Modoranu, Safaryan, Malinovsky, Kurtic, Robert, Richtarik, and Alistarh]{modoranu2024microadam}
Modoranu, I.-V., Safaryan, M., Malinovsky, G., Kurtic, E., Robert, T., Richtarik, P., and Alistarh, D. (2024).
\newblock Microadam: Accurate adaptive optimization with low space overhead and provable convergence.

\bibitem[Panferov {\em et~al.}(2025)Panferov, Chen, Tabesh, Castro, Nikdan, and Alistarh]{panferov2025quest}
Panferov, A., Chen, J., Tabesh, S., Castro, R.~L., Nikdan, M., and Alistarh, D. (2025).
\newblock Quest: Stable training of llms with 1-bit weights and activations.
\newblock {\em arXiv preprint arXiv:2502.05003\/}.

\bibitem[Raffel {\em et~al.}(2020)Raffel, Shazeer, Roberts, Lee, Narang, Matena, Zhou, Li, and Liu]{C4}
Raffel, C., Shazeer, N., Roberts, A., Lee, K., Narang, S., Matena, M., Zhou, Y., Li, W., and Liu, P.~J. (2020).
\newblock Exploring the limits of transfer learning with a unified text-to-text transformer.
\newblock In {\em Proceedings of the 37th International Conference on Machine Learning (ICML)\/}, pages 13934--13944. PMLR.
\newblock T5 and C4 Dataset.

\bibitem[Reddi {\em et~al.}(2019)Reddi, Kale, and Kumar]{reddi2019AMSGrad}
Reddi, S.~J., Kale, S., and Kumar, S. (2019).
\newblock {On the convergence of Adam and beyond}.
\newblock {\em arXiv preprint arXiv:1904.09237\/}.

\bibitem[Robert {\em et~al.}(2024)Robert, Safaryan, Modoranu, and Alistarh]{robert2024ldadam}
Robert, T., Safaryan, M., Modoranu, I.-V., and Alistarh, D. (2024).
\newblock {LDAdam: Adaptive Optimization from Low-Dimensional Gradient Statistics}.
\newblock {\em arXiv preprint arXiv:2410.16103\/}.

\bibitem[Sardana {\em et~al.}(2024)Sardana, Portes, Doubov, and Frankle]{sardana2024chinchillaoptimalaccountinginferencelanguage}
Sardana, N., Portes, J., Doubov, S., and Frankle, J. (2024).
\newblock Beyond chinchilla-optimal: Accounting for inference in language model scaling laws.
\newblock In {\em International Conference on Machine Learning\/}.

\bibitem[Sun {\em et~al.}(2025)Sun, Li, Xie, Han, Wu, Yang, Li, Wang, Li, Xue, Cheng, Tao, Kang, Xu, Wang, and Jiang]{sun2025scaling}
Sun, X., Li, S., Xie, R., Han, W., Wu, K., Yang, Z., Li, Y., Wang, A., Li, S., Xue, J., Cheng, Y., Tao, Y., Kang, Z., Xu, C., Wang, D., and Jiang, J. (2025).
\newblock Scaling laws for floating-point quantization training.
\newblock {\em arXiv preprint arXiv:2501.02423\/}.

\bibitem[Touvron {\em et~al.}(2023)Touvron, Martin, Stone, Albert, Almahairi, Babaei, Bashlykov, Batra, Bhargava, Bhosale, Bikel, Blecher, Ferrer, Chen, Cucurull, Esiobu, Fernandes, Fu, Fu, Fuller, Gao, Goswami, Goyal, Hartshorn, Hosseini, Hou, Inan, Kardas, Kerkez, Khabsa, Kloumann, Korenev, Koura, Lachaux, Lavril, Lee, Liskovich, Lu, Mao, Martinet, Mihaylov, Mishra, Molybog, Nie, Poulton, Reizenstein, Rungta, Saladi, Schelten, Silva, Smith, Subramanian, Tan, Tang, Taylor, Williams, Kuan, Xu, Yan, Zarov, Zhang, Fan, Kambadur, Narang, Rodriguez, Stojnic, Edunov, and Scialom]{touvron2023llama2openfoundation}
Touvron, H., Martin, L., Stone, K., Albert, P., Almahairi, A., Babaei, Y., Bashlykov, N., Batra, S., Bhargava, P., Bhosale, S., Bikel, D., Blecher, L., Ferrer, C.~C., Chen, M., Cucurull, G., Esiobu, D., Fernandes, J., Fu, J., Fu, W., Fuller, B., Gao, C., Goswami, V., Goyal, N., Hartshorn, A., Hosseini, S., Hou, R., Inan, H., Kardas, M., Kerkez, V., Khabsa, M., Kloumann, I., Korenev, A., Koura, P.~S., Lachaux, M.-A., Lavril, T., Lee, J., Liskovich, D., Lu, Y., Mao, Y., Martinet, X., Mihaylov, T., Mishra, P., Molybog, I., Nie, Y., Poulton, A., Reizenstein, J., Rungta, R., Saladi, K., Schelten, A., Silva, R., Smith, E.~M., Subramanian, R., Tan, X.~E., Tang, B., Taylor, R., Williams, A., Kuan, J.~X., Xu, P., Yan, Z., Zarov, I., Zhang, Y., Fan, A., Kambadur, M., Narang, S., Rodriguez, A., Stojnic, R., Edunov, S., and Scialom, T. (2023).
\newblock Llama 2: Open foundation and fine-tuned chat models.
\newblock {\em arXiv preprint arXiv:2307.09288\/}.

\bibitem[Zhou {\em et~al.}(2018)Zhou, Chen, Cao, Yang, and Gu]{zhou2018adaptive}
Zhou, D., Chen, J., Cao, Y., Yang, Z., and Gu, Q. (2018).
\newblock {On the Convergence of Adaptive Gradient Methods for Nonconvex Optimization}.
\newblock {\em arXiv preprint arXiv:1808.05671\/}.

\bibitem[Zhu and Gupta(2017)Zhu and Gupta]{zhu2017prunepruneexploringefficacy}
Zhu, M. and Gupta, S. (2017).
\newblock To prune, or not to prune: exploring the efficacy of pruning for model compression.
\newblock {\em arXiv preprint arXiv:1710.01878\/}.

\end{thebibliography}

\newpage

\appendix

\section*{Appendix Roadmap}

This appendix provides supporting material organized as follows:

\begin{itemize}
    \item \textbf{Experimental Setup} (Appendix~\ref{app:experimental}):  Model architectures, hyperparameters, and training configurations.
    
    \item \textbf{Factorization of Representation Capacity} (Appendix~\ref{app:factorization}):  Detailed analysis showing how representation capacity matrices can be factorized for various compression techniques including quantization, sparsity, and their combinations.
    
    \item \textbf{Ablation Studies on Law Formulation} (Appendix~\ref{app:ablations-law}):  Investigation of different noise distributions (Gaussian, Logistic, Student's t, Laplace) and functional forms (tanh, logistic) for the scaling law formulation.
    
    \item \textbf{Scaling Laws for Vector Quantization} (Appendix~\ref{app:vq}):  Implementation details and algorithms for vector quantization approaches, including forward and backward pass descriptions for HIGGS-based training.
    
    \item \textbf{Theoretical Support} (Appendix~\ref{app:theory}):  Convergence analysis for Adam optimizer with Straight-Through Estimation (STE), including complete proofs and supporting lemmas.
    
    \item \textbf{Improved Sparse Training via RBBM} (Appendix~\ref{app:rbbm}):  Comparison of our backward mask heuristics against RigL and Gradual Magnitude Pruning, with detailed descriptions of different masking strategies.
\end{itemize}

\section{Experimental setup}
\label{app:experimental}

\paragraph{Hyperparameters.}
Table~\ref{tab:models-spec} summarizes the architectural and training hyperparameters for each model size. 

\begin{table}[h]
\centering
\begin{tabular}{c c c c c}
\toprule
Model size & \# Layers & \# Heads & \# Embeddings & Learning rate\\
\midrule
30\,M & 6         & 5           & 640 & $1.2\cdot 10^{-3}$ \\
50\,M & 7         & 6           & 768 & $1.2\cdot 10^{-3}$ \\
100\,M & 8         & 8          & 1024 & $6\cdot 10^{-4}$ \\
200\,M & 10        & 10          & 1280 & $3\cdot 10^{-4}$ \\
\bottomrule
\end{tabular}
\vspace{0.2cm}
\caption{Key training hyperparameters for each model size.}
\label{tab:models-spec}
\end{table}

We use 8x80GB H100 machines for efficient training, and training one model takes on average 1 hour. To produce the full set of results we ran in total approximately 250 such training runs for various compression configurations.

\section{Factorization of Representation Capacity }
\label{app:factorization}

Figures~\ref{fig:quant-wa}-\ref{fig:sparse-quant-factorisation} show factorization of the representation capacity matrix for various in-training compression techniques: 

\begin{enumerate}
  \item Quantized weights and activations (Fig.~\ref{fig:quant-wa}).
  \item Sparsity + QuEST quantizer (Fig.~\ref{fig:quant-sparse-quest}).
  \item Joint sparse \& quantized weights + activations (Fig.~\ref{fig:sparse-quant-wa-err}), for all combinations $(s_a, q_a, q_b)$ for sparsity $s_a \in [0.25, 0.5, 0.75]$ and bit widths $q_a, q_b \in [2, 4, 6]$.
  \item Sparsity + uniform quantizer with maximum absolute value as a scale (Fig.~\ref{fig:sparse-quant-factorisation}).
\end{enumerate}

From the factorized representation-capacity matrices we observe the following:
\begin{enumerate}
  \item The element-wise error of the fitted coefficients $\rho$ (from our scaling law) is of order $10^{-3}$–$10^{-2}$.
  \item A rank-1 row-column outer product accurately approximates the matrix, confirming the multiplicative property of representation capacity $\rho$ in various scenarios.
  \item Approximation error remains of the order $10^{-2}$, except for the cases of \textit{extreme} 2-bit quantization, where $\rho \lesssim 0.1$. We explain this gap due to the poorer performance of the optimizer in these extreme compression regimes, which is not taken into account currently by our model (as it uses the same coefficients for both 16 and 2 bits). 
\end{enumerate}

    \begin{figure}[h!]{}
            \centering
    \includegraphics[width=0.85\linewidth]{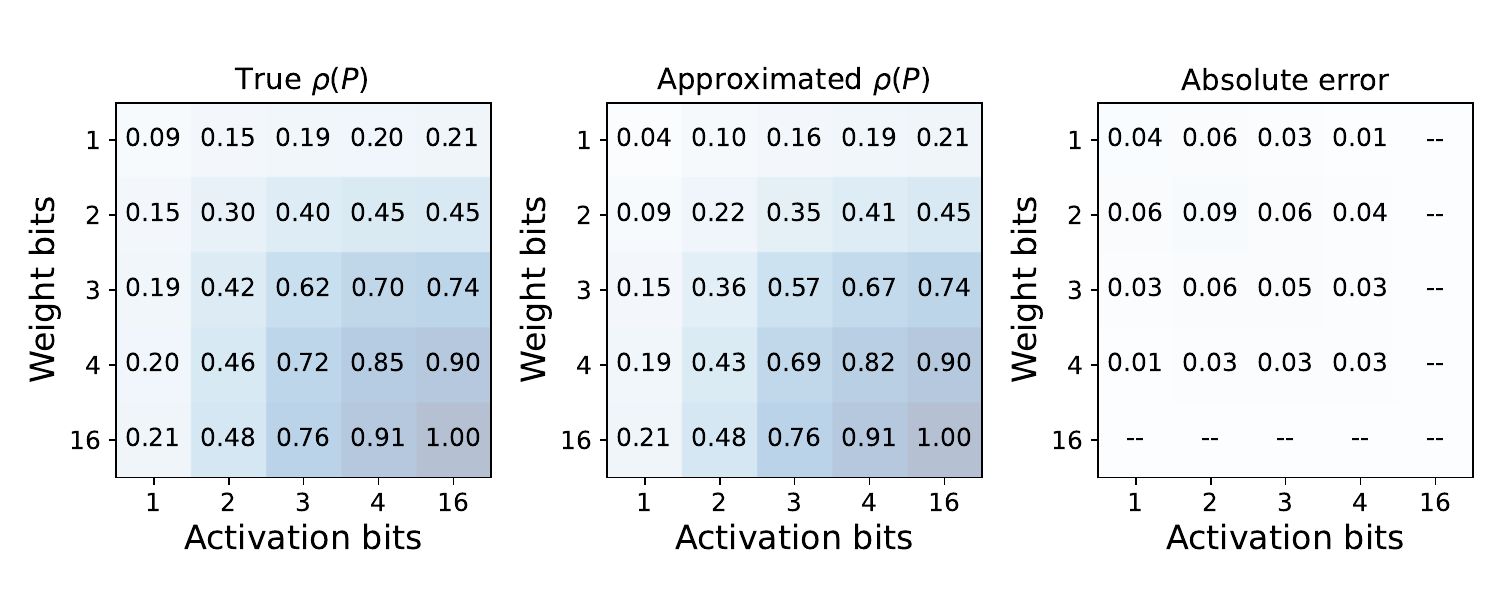}
        \vspace{-0.5em}
    \caption{Representation capacity coefficients for independent quantization of weights and activations. Element-wise $\rho$ fitting error is not greater than $5\cdot10^{-3}$.}
    \label{fig:quant-wa}
    \end{figure}

    \begin{figure}[h!]
        \centering
        \includegraphics[width=0.85\linewidth]{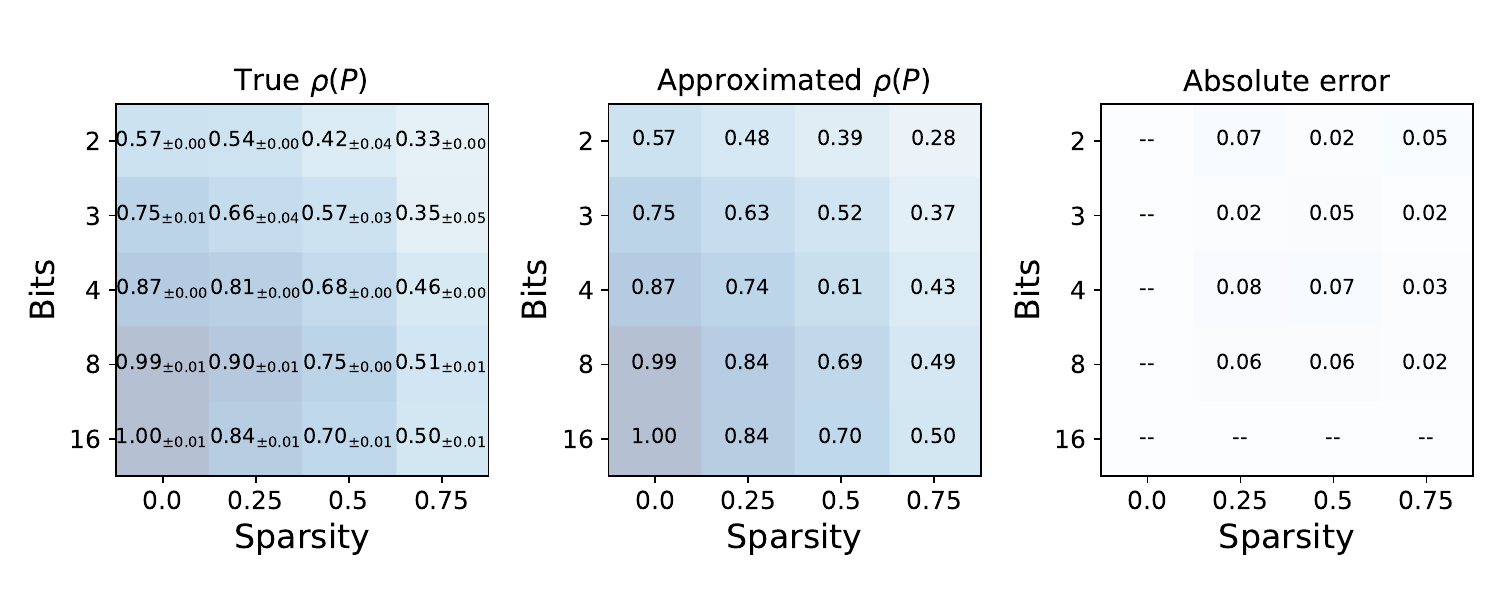}
        \caption{Representation capacity coefficients with fit errors in case of sparsity combined with the QuEST quantization. }
        \label{fig:quant-sparse-quest}
    \end{figure}

    \begin{figure}[h!]
        \centering
        \includegraphics[width=0.5\linewidth]{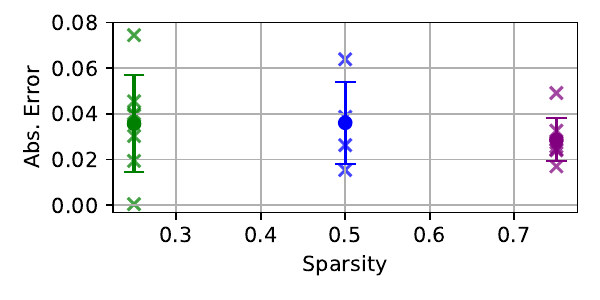}
        \caption{Representation capacity fit errors for sparse+quantized weights and quantized activations. Error bars denote $\pm 1$ standard deviation from the mean.}
        \label{fig:sparse-quant-wa-err}
    \end{figure}

    \begin{figure}[h!]
            \centering
    \includegraphics[width=0.85\linewidth]{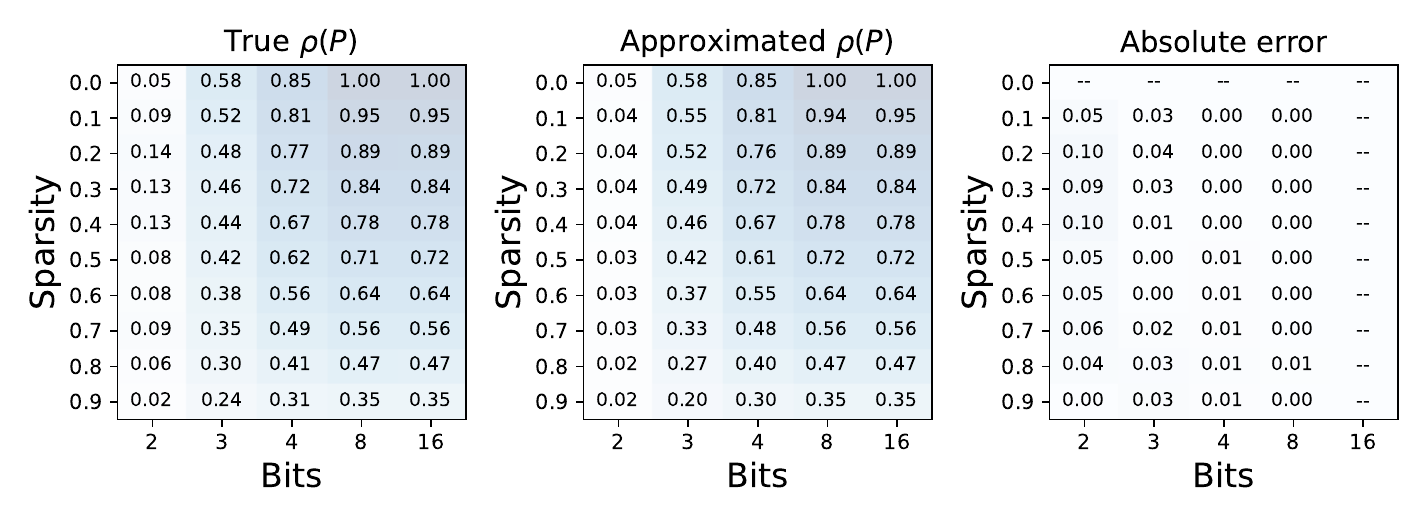}
        \vspace{-0.5em}
    \caption{Representation capacity coefficients matrix for sparsity applied with uniform quantization. Element-wise $\rho$ fitting error is not greater than $2\cdot10^{-3}$.}
    \label{fig:sparse-quant-factorisation}
    \end{figure}

\section{Ablation studies on Law Formulation}
\label{app:ablations-law}

\subsection{Evaluating RMSE across Different Distributions}

We investigate how the choice of noise distribution used in our law formulation from Sec.~\ref{sec:mse} affects the predicted representation capacity. In Figure \ref{fig:ablation-distributions} we plot the mapping \(\rho(MSE)\) for different bit widths using Logistic, Student's t, and Laplace noise distributions. Each distribution is rescaled to have zero mean and unit variance.

We observe that, no matter which noise distribution we choose, the mapping $\rho(MSE)$ always remains strictly monotonically decreasing. In principle, one could use heavy‐tailed distributions (for example, Student-t or Laplace) to give more weight to extreme outlier errors. However, this leads to a smaller range of MSE values. By contrast, assuming Gaussian noise---which we propose---produces the widest spread of MSE, which in turn allows for a better fit for the scaling law. In short, although monotonicity is preserved under various distributions, the Gaussian MSE delivers the best overall representation capacity prediction, so we adopt it as our default formulation. 

Throughout this work, unless specified otherwise, MSE is computed over standard Gaussian input.

\begin{figure}[h]
    \centering
    \begin{subfigure}[t]{0.49\textwidth}
    \centering
        \includegraphics[width=\textwidth]{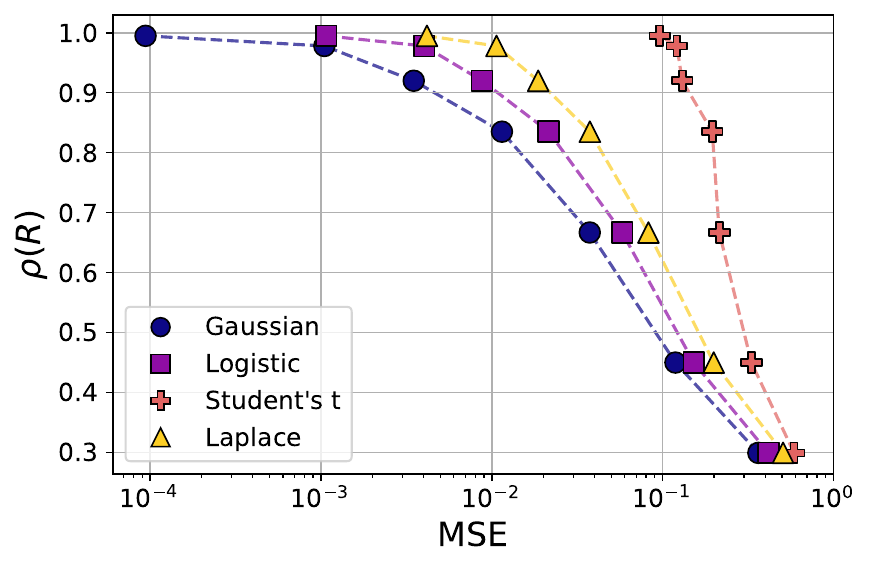}
        \caption{Effect of input noise distribution on the mapping $\rho(MSE)$.}
        \label{fig:ablation-distributions}
    \end{subfigure}
    \hfill
    \begin{subfigure}[t]{0.49\textwidth}
    \centering
        \includegraphics[width=\textwidth]{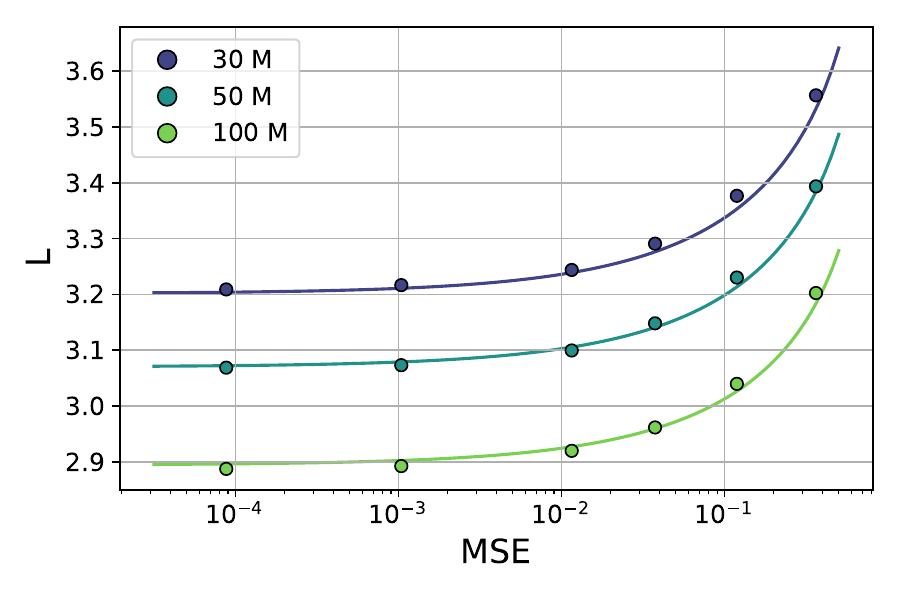}
        \caption{$L(MSE)$ fit for the best functional fit, $tanh$.}
        \label{fig:ablation-func}
    \end{subfigure}
\end{figure}

\subsection{Functional form of the Law}

The behavior of $\rho(\id{GMSE})$ observed in our experiments can be captured by fitting multiple smooth, monotonically decreasing functions, with no more than 3 additional parameters. In principle, a wide range of such functions can be used to model this relationship, depending on the desired fit properties.

For lower overall fitting error, we found it beneficial to constrain the function to satisfy boundary conditions $f(0) = 1$ and $f(1) = 0$. This way the correct behavior in the high-error region $MSE \lesssim 1$ is enforced, which is critical for stable predictions in the extreme compression cases. We fit hyperparameters for data points $(MSE, L)$ using the Huber loss with $\delta=1e-4$. The corresponding fits are summarized in Table~\ref{tab:ablations-func}, the fitting error is calculated for the combined scaling law $L(MSE) = L(\rho(MSE))$.

\begin{table}[h]
    \centering
    \begin{tabular}{|c|c|c|}
    \toprule
        & Functional form & Fitting error \\
        \midrule
        Decoupled & Independently fitted $\rho$ & $4.2 \cdot 10^{-4}$ \\ \midrule
        Tanh & $\displaystyle \rho = \tanh(F \cdot \log_{1/4}\text{MSE})^C$ & $4.7\cdot 10^{-4}$ \\ 
        Logistic & $\displaystyle \rho = (1 + B \cdot \text{MSE}^A)^{-1}$ & $4.9\cdot 10^{-4}$ \\ 
        Logistic (1, 0) & $\rho = \displaystyle \frac{1 - \text{MSE}^A}{1 + B \cdot \text{MSE}^A}$ & $1.1\cdot 10^{-3}$ \\
        \bottomrule
    \end{tabular}
    \vspace{0.5em}
    \caption{Functional form choices and associated fitting error.}
    \vspace{-1em}
    \label{tab:ablations-func}
\end{table}

Throughout this work, we adopt the functional form of hyperbolic tangent as it provides the smallest fitting error.

\section{Scaling Laws for Vector Quantization}
\label{app:vq}

In this section, we provide detailed information  about the Vector Quantization approach used to produce the results in  Figure~\ref{fig:vq_sparse_wa_factorization}(a). Algorithms~\ref{alg:vq_forward} and ~\ref{alg:vq_backward} describe the forward and backward passes over a linear layer actively quantized with  HIGGS for  row-major weights. As was described earlier, our method is combines ideas from~\citet{panferov2025quest} for the gradient estimator, and~\citet{malinovskii-etal-2025-higgs} for the lattice representation. We use the trust estimation method that zeros out gradients for any point lying outside a hypersphere of radius $R$: $\|x\|_2^2 > R^2$. Our experiments were conducted on 30M and 50M models using the same set of hyperparameters as  in Sec.~\ref{sec:experimental-setup}.  

\begin{algorithm}[h]
\caption{\texttt{VQ Training Forward}}
\label{alg:vq_forward}
\begin{algorithmic}[1]
    \State {\bfseries Input:} Input activations $\mathbf{x}$, row-major weight $\mathbf{w}$

    \State $\mathbf{w}_h = \text{HT}(\mathbf{w})$
    \State $\hat{\mathbf{w}}_h = \proj_{grid}{\mathbf{w}_h}$
    \State $\mathbf{y} = \mathbf{x} \hat{\mathbf{w}}_h^T$
    \State {\bfseries Return:} $\mathbf{y}$, $ \mathbf{x}$, $\hat{\mathbf{w}}_h$, $M_{grid}(\mathbf{w}_h; \hat{\mathbf{w}}_h)$
\end{algorithmic}
\end{algorithm}

\begin{algorithm}[h]
\caption{\texttt{VQ Training Backward}}
\label{alg:vq_backward}
\begin{algorithmic}[1]
    \State {\bfseries Input:} $\frac{\partial L}{\partial \mathbf{y}}$, $\mathbf{x}$, $\hat{\mathbf{w}}_h$, $M_{grid}(\mathbf{w}_h; \hat{\mathbf{w}}_h)$
    \State $\frac{\partial L}{\partial \mathbf{x}} = \frac{\partial L}{\partial \mathbf{y}} \hat{\mathbf{w}}_h$
    
    \State $\frac{\partial L}{\partial \hat{\mathbf{w}}_h} = \mathbf{x}^T \frac{\partial L}{\partial \mathbf{y}}$
    \State $\frac{\partial L}{\partial \mathbf{w}} = \text{IHT}\left(M_{grid}(\mathbf{w}_h; \hat{\mathbf{w}}_h) \odot \frac{\partial L}{\partial \hat{\mathbf{w}}_h}\right)$
    \State {\bfseries Return:} $\frac{\partial L}{\partial \mathbf{x}}$, $\frac{\partial L}{\partial \mathbf{w}}$
\end{algorithmic}
\end{algorithm}

\section{Theoretical Support}
\label{app:theory}

Here we provide the full proof of Theorem \ref{thm:adam-ste} giving a convergence analysis of the Adam optimizer when used with STE. For completeness, the description of the algorithm is presented in the Algorithm \ref{alg:AdamSTE}. 

\begin{algorithm}[H]
    \caption{\label{alg:AdamSTE} Adam with Straight Through Estimation (STE) and AMSGrad normalization}
    \begin{algorithmic}[1]
        \State Input: parameters $\beta_1,\,\beta_2\in(0,1)$, $\epsilon>0$, step-size $\eta>0$, $\theta_1\in\R^d$, $m_0=v_0=\widetilde v_0=0_N$
        \For{$t=\{1, 2, ..., T\}$}
            \State{$\widehat{\theta}_t = \mathcal{C}(\theta_t)$ \hfill$\diamond$ Compress the model via quantization and/or sparsification}
            \State{$g_{t} = \widetilde\nabla_{\theta} f(\widehat\theta_t)$ \hfill$\diamond$ Compute STE for compressed model}
            \State{$m_t=\beta_1 m_{t-1}+(1-\beta_1) g_t$ \hfill$\diamond$ Update first-order gradient momentum}
            \State{$v_t=\beta_2 v_{t-1}+(1-\beta_2) g_t^2$ \hfill$\diamond$ Update second-order gradient momentum}
            \State{$\tilde v_t=\max(v_t,\tilde v_{t-1})$ \hfill$\diamond$ Apply AMSGrad normalization\label{line:v}}
            \State{$\theta_{t+1}=\theta_{t}-\eta\frac{m_t}{\sqrt{\tilde v_t+\epsilon}}$ \hfill$\diamond$ Update the uncompressed model parameters}
        \EndFor
    \end{algorithmic}
\end{algorithm}

\begin{proof}
Let $G$ be the gradient bound with respect to $\ell_2$ norm, that is, $\|g_t\|_2 \le G$. Using the relationship between $\ell_2$ and $\ell_{\infty}$ norms, we conclude $G \le \sqrt{d}G_{\infty}$. Let $\Gamma_t = \diag^{-\nicefrac{1}{2}}(\tilde{v}_t+\epsilon)$ be the preconditioning (diagonal) matrix and rewrite the main update rule as
$$
\theta_{t+1} = \theta_t - \eta\Gamma_t m_t.
$$
Letting $\theta_{0} = \theta_1$, define virtual iterates $x_t$ as follows:
\begin{equation*}\label{def:virtual-iter}
x_{t}
= \frac{1}{1-\beta_1}\theta_{t} - \frac{\beta_1}{1-\beta_1}\theta_{t-1}.
\end{equation*}
In particular, $x_1 = \theta_1$. Then, the update rule for the virtual iterates becomes
\begin{eqnarray*}
x_{t+1} - x_t
&=& \frac{1}{1-\beta_1}(\theta_{t+1}-\theta_t) - \frac{\beta_1}{1-\beta_1}(\theta_{t} - \theta_{t-1}) \\
&=& -\frac{\eta}{1-\beta_1}\Gamma_t m_t + \frac{\eta\beta_1}{1-\beta_1}\Gamma_{t-1}m_{t-1} \\
&=& -\frac{\eta}{1-\beta_1}\Gamma_t m_t + \frac{\eta\beta_1}{1-\beta_1}\Gamma_{t-1}m_{t-1} \pm \frac{\eta\beta_1}{1-\beta_1}\Gamma_{t}m_{t-1} \\
&=& -\frac{\eta}{1-\beta_1}\Gamma_t (m_t - \beta m_{t-1}) + \frac{\eta\beta_1}{1-\beta_1} \underbrace{(\Gamma_{t-1} - \Gamma_{t} )}_{\eqdef \Delta\Gamma_t} m_{t-1} \\
&=& -\eta\Gamma_t g_t + \frac{\eta\beta_1}{1-\beta_1} \Delta\Gamma_t m_{t-1}.
\end{eqnarray*}

Next we apply smoothness (Assumption~\ref{ass:smooth}) of the loss function $f$ over the iterates $x_t$:
\begin{align*}
    f(x_{t+1})\leq f(x_t)+\langle \nabla f(x_t), x_{t+1}-x_t\rangle+\frac{L}{2}\| x_{t+1}-x_t\|^2.
\end{align*}
 
Taking expectation and splitting the inner product into two part, we obtain
\begin{eqnarray}
    && \E[f(x_{t+1})] - \E[f(x_t)] \nonumber\\
    &\leq& -\eta\E\left[\left\langle \nabla f(x_t), \Gamma_t g_t \right\rangle\right]
    + \eta \E\left[\left\langle \nabla f(x_t), \frac{\beta_1}{1-\beta_1} \Delta\Gamma_t m_{t-1} \right\rangle\right] \nonumber\\
    && + \frac{\eta^2L}{2} \E\left[\left\| \Gamma_t g_t - \frac{\beta_1}{1-\beta_1} \Delta\Gamma_t m_{t-1} \right\|^2\right] \nonumber\\
    &=&\underbrace{-\eta \E\left[\left\langle \nabla f(\theta_t),  \Gamma_t g_t \right\rangle\right]}_{I}
    + \underbrace{\eta \E\left[\left\langle \nabla f(x_t), \frac{\beta_1}{1-\beta_1}\Delta\Gamma_t m_{t-1} \right\rangle\right]}_{II} \nonumber\\
    &&+ \underbrace{\frac{\eta^2L}{2} \E\left[\left\| \Gamma_t g_t - \frac{\beta_1}{1-\beta_1}\Delta\Gamma_t m_{t-1} \right\|^2\right]}_{III}
    + \underbrace{\eta\E\left[\left\langle \nabla f(\theta_t)-\nabla f(x_t), \Gamma_t g_t \right\rangle\right]}_{IV}. \label{eq0}
\end{eqnarray}
	
	In the following, we bound all the four terms mentioned above.
	
	\textbf{Bounding term I.} Let $\|\Delta\Gamma_t\|$ be the operator norm (with respect to $\ell_2$ norm) of the matrix $\Delta\Gamma_t$. Since $\Delta\Gamma_t$ is diagonal, the spectral norm coincides with the largest diagonal value in magnitude. Using unbiasedness of the stochastic gradients, we have
	\begin{eqnarray}
		I&=&
		-\eta\E\left[\left\langle \nabla f(\theta_t), \Gamma_{t-1} g_t \right\rangle\right] - \eta\E\left[\left\langle \nabla f(\theta_t), \Delta\Gamma_tg_t\right\rangle\right] \nonumber\\
		&\leq& -\eta\E\left[\left\langle \nabla f(\theta_t), \Gamma_{t-1}\nabla f(\widehat\theta_t)\right\rangle\right]+\eta G^2\E[\|\Delta\Gamma_t\|].  \nonumber\\
		&=& -\eta\E\left[\left\langle \nabla f(\widehat\theta_t), \Gamma_{t-1}\nabla f(\widehat\theta_t)\right\rangle\right]
        + \eta\E\left[\left\langle \nabla f(\widehat\theta_t) - \nabla f(\theta_t), \Gamma_{t-1}\nabla f(\widehat\theta_t)\right\rangle\right]
        +\eta G^2\E[\|\Delta\Gamma_t\|].  \nonumber\\
		&\leq& -\eta\lambda_{\min}(\Gamma_{t-1})\E[\|\nabla f(\widehat\theta_t)\|^2]
        + \eta LG\E[\|\Gamma_{t-1}\| \|\widehat\theta_t-\theta_t\|]
        +\eta G^2\E[\|\Delta\Gamma_t\|] \nonumber\\
		&\leq& -\frac{\eta}{C_0}\E[\|\nabla f(\widehat\theta_t)\|^2]
        + \eta LG\E[\|\widehat{\theta}_t - \theta_t\| \cdot \|\Gamma_{t-1}\|]
        +\eta G^2\E[\|\Delta\Gamma_t\|], \label{eq:I}
	\end{eqnarray}
	where we used Assumption~\ref{ass:boundgrad} and Lemma~\ref{lem:bound-v} to bound
	$$
	\lambda_{\min}(\Gamma_{t-1})
	\ge \left(\|\tilde v_{t-1}\|_{\max} + \epsilon\right)^{-1/2}
	\ge \left(G^2 + \epsilon\right)^{-1/2} \eqdef \frac{1}{C_0}.
	$$

\textbf{Bounding term II.} Splitting the inner product again and bounded each term, we get
\begin{align}
    II
    & =\eta\E\left[\left\langle \nabla f(\theta_t), \frac{\beta_1}{1-\beta_1} \Delta\Gamma_t m_{t-1} \right\rangle\right]
    + \eta\E\left[\left\langle \nabla f(x_t)-\nabla f(\theta_t), \frac{\beta_1}{1-\beta_1} \Delta\Gamma_t m_{t-1} \right\rangle\right] \nonumber\\
    &\leq \frac{\eta\beta_1}{1-\beta_1}\E\left[\|\nabla f(\theta_t)\| \left\| \Delta\Gamma_t m_{t-1} \right\|\right]
    + \frac{\eta^2 L \beta_1^2}{(1-\beta_1)^2} \E\left[\left\| \Gamma_{t-1} m_{t-1} \right\| \cdot \left\| \Delta\Gamma_t m_{t-1}\right\|\right] \nonumber\\
    &\leq \frac{\eta\beta_1}{1-\beta_1} G^2 \E[\|\Delta\Gamma_t\|] + \frac{\eta^2 \beta_1^2 LG^2}{(1-\beta_1)^2\sqrt\epsilon} \E[\|\Delta\Gamma_t\|],  \label{eq:II}
\end{align}
where we used the fact that the largest eigenvalue $\lambda_{\max}(\Gamma_t) = \|\Gamma_t\| = (\|\tilde v_t\|_{\min} + \epsilon)^{-1/2} \le \epsilon^{-1/2}$. The second inequality is due to the smoothness of $f$, and the last inequality is due to Lemma~\ref{lem:bound-perturb}, Assumption~\ref{ass:boundgrad} and the property of norms.
	
\textbf{Bounding term III.} This term can be bounded as follows:
\begin{align}
    III
    & \leq \eta^2 L\E\left[\left\|\Gamma_tg_t \right\|^2\right] + \frac{\eta^2 L\beta_1^2}{(1-\beta_1)^2} \E\left[\left\|\Delta\Gamma_t m_{t-1}\right\|^2\right] \nonumber\\
    &\leq \frac{\eta^2 L}{\epsilon}\E[\| g_t-\nabla f(\widehat\theta_t)+\nabla f(\widehat\theta_t)\|^2] + \frac{\eta^2 L\beta_1^2}{(1-\beta_1)^2} \E\left[\left\|\Delta\Gamma_t m_{t-1}\right\|^2\right] \nonumber\\
    &\leq \frac{\eta^2 L}{\epsilon} \left(\E[\|\nabla f(\widehat\theta_t)\|^2] + \sigma^2\right) + \frac{\eta^2 L\beta_1^2 G^2}{(1-\beta_1)^2} \E[\|\Delta\Gamma_t\|^2] \nonumber\\
    &\leq \frac{\eta^2 L}{\epsilon}\E[\|\nabla f(\widehat\theta_t)\|^2] + \frac{\eta^2 L\sigma^2}{\epsilon}+\frac{\eta^2 L\beta_1^2 G^2}{(1-\beta_1)^2} \E[\|\Delta\Gamma_t\|^2],  \label{eq:III}
\end{align}
where we used Assumption~\ref{ass:var} that $g_t$ is unbiased with bounded variance $\sigma^2$.
	
\textbf{Bounding term IV.} Finally, for the fourth term, we have
\begin{align}
    IV
    &= \eta\E\left[\left\langle \nabla f(\theta_t)-\nabla f(x_t), \Gamma_{t-1}g_t \right\rangle\right] + \eta\E\left[\left\langle \nabla f(\theta_t)-\nabla f(x_t), \Delta\Gamma_tg_t \right\rangle\right] \nonumber\\
    &\leq \eta\E\left[\left\langle \nabla f(\theta_t)-\nabla f(x_t), \Gamma_{t-1}\nabla f(\widehat\theta_t) \right\rangle\right] + \frac{\eta^2 L\beta_1}{1-\beta_1}\E\left[\left\| \Gamma_t m_{t-1} \right\| \|\Delta\Gamma_t g_t\|\right] \nonumber\\
    &\overset{(a)}{\leq} \frac{\eta \rho}{2\epsilon}\E[\|\nabla f(\widehat\theta_t)\|^2]+\frac{\eta}{2\rho}\mathbb E[\|\nabla f(\theta_t)-\nabla f(x_t)\|^2]+\frac{\eta^2 \beta_1 LG^2}{(1-\beta_1)\sqrt\epsilon} \mathbb E[\|\Delta\Gamma_t\|]  \nonumber\\
    &\overset{(b)}{\leq} \frac{\eta \rho}{2\epsilon}\E[\|\nabla f(\widehat\theta_t)\|^2]+\frac{\eta^3 \beta_1^2 L^2}{2(1-\beta_1)^2\rho}\E\left[\left\| \Gamma_t m_{t-1} \right\|^2\right] + \frac{\eta^2 \beta_1LG^2}{(1-\beta_1)\sqrt\epsilon} \E[\|\Delta\Gamma_t\|] \nonumber \\
    &\leq \frac{\eta \rho}{2\epsilon}\E[\|\nabla f(\widehat\theta_t)\|^2]+\frac{\eta^3\beta_1^2 L^2}{2(1-\beta_1)^2\rho\epsilon} \E\left[\left\|m_{t-1} \right\|^2\right] +\frac{\eta^2L \beta_1G^2}{(1-\beta_1)\sqrt\epsilon} \E[\|\Delta\Gamma_t\|],  \label{eq:IV}
\end{align}
where (a) is due to Young's inequality and (b) is based on Assumption~\ref{ass:smooth}. Now integrating \eqref{eq:I}, \eqref{eq:II}, \eqref{eq:III}, \eqref{eq:IV} into \eqref{eq0},

\begin{eqnarray*}
    I &\leq& -\frac{\eta}{C_0}\E[\|\nabla f(\widehat\theta_t)\|^2]
    + \eta LG\E[\|\widehat\theta_t-\theta_t\| \cdot \|\Gamma_{t-1}\|]
    +\eta G^2\E[\|\Delta\Gamma_t\|] \\
    II &\le& \frac{\eta\beta_1 }{1-\beta_1} G^2 \E[\|\Delta\Gamma_t\|] + \frac{\eta^2 \beta_1^2 LG^2}{(1-\beta_1)^2\sqrt\epsilon} \E[\|\Delta\Gamma_t\|] \\
    III &\le& \frac{\eta^2 L}{\epsilon}\E[\|\nabla f(\widehat\theta_t)\|^2] + \frac{\eta^2 L\sigma^2}{\epsilon}+ \frac{\eta^2 \beta_1^2 LG^2}{(1-\beta_1)^2} \E[\|\Delta\Gamma_t\|^2]\\
    IV &\le& \frac{\eta \rho}{2\epsilon}\E[\|\nabla f(\widehat\theta_t)\|^2]+ \frac{\eta^3 \beta_1^2L^2}{2(1-\beta_1)^2\rho\epsilon} \E\left[\left\|m_{t-1} \right\|^2\right] +\frac{\eta^2L \beta_1G^2}{(1-\beta_1)\sqrt\epsilon} \E[\|\Delta\Gamma_t\|],
\end{eqnarray*}

and taking the telescoping summation over $t=1,\dots,T$, we obtain
\begin{align*}
    &\E[f(x_{T+1})]- \E[f(x_1)] \\
    &\leq \left( -\frac{\eta}{C_0}+\frac{\eta^2 L}{\epsilon}+\frac{\eta \rho}{2\epsilon}\right)\sum_{t=1}^T\E[\|\nabla f(\widehat\theta_t)\|^2]
    +\frac{T\eta^2 L\sigma^2}{\epsilon}
    + \frac{\eta^3 \beta_1^2 L^2}{2(1-\beta_1)^2\rho\epsilon} \sum_{t=1}^T\E\left[\left\|m_{t-1}\right\|^2\right] \\
    &\quad + \left(\frac{\eta G^2}{1-\beta_1} + \frac{\eta^2 \beta_1 LG^2}{(1-\beta_1)^2\sqrt\epsilon}\right)\sum_{t=1}^T\E[\|\Delta\Gamma_t\|]+\frac{\eta^2\beta_1^2LG^2}{(1-\beta_1)^2} \sum_{t=1}^T\E[\|\Delta\Gamma_t\|^2] 
    + \frac{\eta LG}{T}\sum_{t=1}^T\E[\|\widehat\theta_t-\theta_t\| \cdot \|\Gamma_{t-1}\|] \\
    &\leq \left( -\frac{\eta}{C_0}+\frac{\eta^2 L}{\epsilon}+\frac{\eta \rho}{2\epsilon} + \frac{\eta^3\beta_2^2 L^2}{2(1-\beta_1)^2\rho\epsilon}\right)\sum_{t=1}^T\E[\|\nabla f(\widehat\theta_t)\|^2]
    +\frac{T\eta^2 L\sigma^2}{\epsilon}
    + \frac{T\eta^3 L^2 \beta_1^2 \sigma^2}{2(1-\beta_1)^2\rho\epsilon} \\
    &\quad + \left(\frac{\eta G^2}{1-\beta_1} + \frac{\eta^2 \beta_1 LG^2}{(1-\beta_1)^2\sqrt\epsilon}\right)\sum_{t=1}^T\E[\|\Delta\Gamma_t\|]+\frac{\eta^2\beta_1^2LG^2}{(1-\beta_1)^2} \sum_{t=1}^T\E[\|\Delta\Gamma_t\|^2]
    + \frac{\eta LG}{T}\sum_{t=1}^T\E[\|\widehat\theta_t-\theta_t\| \cdot \|\Gamma_{t-1}\|], \\
\end{align*}
where we used Lemma \ref{lem:bound-perturb}. Choosing $\rho=\frac{\epsilon}{2C_0}$ and $\eta\leq \eta_0 \eqdef \frac{\epsilon(1-\beta_1)}{4L C_0}$ and using Lemma \ref{lem:bound-precond}, we get
\begin{align*}
    \E[f(x_{T+1})-f(x_1)]
    &\leq -\frac{\eta}{2C_0}\sum_{t=1}^T\E[\|\nabla f(\widehat\theta_t)\|^2]
    +\frac{T\eta^2 L \sigma^2}{\epsilon}
    +\frac{T\eta^3 L^2 C_0\beta_1^2 \sigma^2}{(1-\beta_1)^2\epsilon^2} \\
    &\quad + \frac{2\eta G^2}{(1-\beta_1)\sqrt\epsilon}
    + \frac{4\eta^2 \beta_1 LG^2}{(1-\beta_1)^2\epsilon} + \frac{\eta LG}{T}\sum_{t=1}^T\E[\|\widehat\theta_t-\theta_t\| \cdot \|\Gamma_{t-1}\|].
\end{align*}

Re-arranging terms, we get
\begin{eqnarray*}
    \frac{1}{T}\sum_{t=1}^T \E[\|\nabla f(\widehat\theta_t)\|^2]
    &\leq& 2C_0\left(\frac{f(\theta_1)-f^*}{T\eta}
    + \frac{\eta L \sigma^2}{\epsilon} 
    + \frac{\eta^2 L^2 C_0\beta_1^2 \sigma^2}{(1-\beta_1)^2\epsilon^2} \right) \\
    && + 4C_0\left(\frac{G^2}{T(1-\beta_1)\sqrt\epsilon}
    +\frac{\eta \beta_1 LG^2}{T(1-\beta_1)^2\epsilon} \right)
    + \frac{2C_0 LG}{T}\sum_{t=1}^T\E\left[\frac{\|\widehat{\theta}_t - \theta_t\|_2}{\sqrt{\epsilon + \|\tilde{v}_{t-1}\|_{\min}}}\right],
\end{eqnarray*}
where in the last inequality we used $x_1=\theta_1$ and the lower bound $f^* \le f(\theta)$ for all $\theta\in\R^d$. Finally, choosing $\eta = \min(\eta_0, \frac{1}{\sqrt{T}})$ and considering the two cases, we continue
\begin{eqnarray*}
    \frac{1}{T}\sum_{t=1}^T \E[\|\nabla f(\widehat\theta_t)\|^2]
    &\leq& 2C_0\left( \max\left(1,\frac{1}{\eta_0\sqrt{T}}\right) \frac{f(\theta_1)-f^*}{\sqrt{T}}
    + \frac{L \sigma^2}{\epsilon\sqrt{T}} 
    + \frac{L^2 C_0\beta_1^2 \sigma^2}{(1-\beta_1)^2\epsilon^2 T} \right) \\
    && + 4C_0\left(\frac{G^2}{T(1-\beta_1)\sqrt\epsilon}
    +\frac{\beta_1 LG^2}{T^{\nicefrac{3}{2}}(1-\beta_1)^2\epsilon} \right)
    + \frac{2C_0 LG}{T}\sum_{t=1}^T\E\left[\frac{\|\widehat{\theta}_t - \theta_t\|_2}{\sqrt{\epsilon + \|\tilde{v}_{t-1}\|_{\min}}}\right] \\
    &\leq& 2C_0\left( \frac{f(\theta_1)-f^*}{\sqrt{T}}
    + \frac{L \sigma^2}{\epsilon\sqrt{T}} 
    + \frac{L^2 C_0\beta_1^2 \sigma^2}{(1-\beta_1)^2\epsilon^2 T} \right) \\
    && + 4C_0\left(\frac{f(\theta_1)-f^*}{2\eta_0 T} + \frac{G^2}{T(1-\beta_1)\sqrt\epsilon}
    +\frac{\beta_1 LG^2}{T^{\nicefrac{3}{2}}(1-\beta_1)^2\epsilon} \right) \\
    && + \frac{2C_0 LG}{\sqrt{\epsilon}} \E\left[ \frac{1}{T}\sum_{t=1}^T\|\widehat{\theta}_t - \theta_t\|_2\right] ,
\end{eqnarray*}

Using the bounds $G\le\sqrt{N}G_{\infty}$, $C_0 \le\frac{\sqrt{N}}{2}C$ and surpessing higher order terms, we simplify the bound to
\begin{equation*}
    \frac{1}{T}\sum_{t=1}^T \E[\|\nabla f(\widehat\theta_t)\|^2]
    \leq \frac{C LG_{\infty}}{\sqrt{\epsilon}}\E\left[ \frac{1}{T}\sum_{t=1}^T\|\widehat{\theta}_t - \theta_t\|_2\right] \cdot N
    + \frac{C\sqrt{N}}{\sqrt{T}}\left( f(\theta_1)-f^*
    + \frac{L \sigma^2}{\epsilon} \right)
    + \mathcal{O}\left(\frac{N^{\nicefrac{3}{2}}}{T}\right),
\end{equation*}
which completes the proof of the theorem.
\end{proof}

\begin{lemma}\label{lem:bound-perturb}
	For any $t\ge1$ the following bounds hold:
	\begin{equation}\label{bound-bcum}
		\|m_t\| \le G, \quad
           \sum_{t=1}^T \E\left[\|m_{t}\|^2\right] \le T\sigma^2 + \sum_{t=1}^{T} \E\left[\|\nabla f(\widehat\theta_{t})\|^2\right]
	\end{equation}
\end{lemma}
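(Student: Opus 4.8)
The plan is to establish both inequalities directly from the momentum recursion $m_t = \beta_1 m_{t-1} + (1-\beta_1) g_t$ with $m_0 = 0_N$. Unrolling gives the closed form $m_t = \sum_{s=1}^t w_{t,s}\, g_s$ with nonnegative weights $w_{t,s} \eqdef (1-\beta_1)\beta_1^{t-s}$ satisfying $\sum_{s=1}^t w_{t,s} = 1-\beta_1^t \le 1$; so $m_t$ is a \emph{sub}-convex combination of the past stochastic gradients. Throughout I will use the $\ell_2$ gradient bound $\|g_t\|_2 \le G$ recorded at the beginning of the proof of Theorem~\ref{thm:adam-ste} (obtained from Assumption~\ref{ass:boundgrad} via $\|g_t\|_2 \le \sqrt{N}\,G_\infty$).

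For the first bound I would use a one-line induction on $t$: the base case is $\|m_0\| = 0 \le G$, and if $\|m_{t-1}\| \le G$ then the triangle inequality gives $\|m_t\| \le \beta_1\|m_{t-1}\| + (1-\beta_1)\|g_t\| \le \beta_1 G + (1-\beta_1)G = G$. For the second bound I would first apply Jensen's inequality to the sub-convex combination: writing $W_t = \sum_{s=1}^t w_{t,s} \le 1$, convexity of $\|\cdot\|^2$ yields $\|m_t\|^2 \le W_t \sum_{s=1}^t w_{t,s}\|g_s\|^2 \le \sum_{s=1}^t w_{t,s}\|g_s\|^2 = (1-\beta_1)\sum_{s=1}^t \beta_1^{t-s}\|g_s\|^2$. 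Summing over $t=1,\dots,T$ and swapping the order of summation gives $\sum_{t=1}^T\|m_t\|^2 \le \sum_{s=1}^T \|g_s\|^2\,(1-\beta_1)\sum_{t=s}^T\beta_1^{t-s} \le \sum_{s=1}^T \|g_s\|^2$, since the inner geometric sum equals $1-\beta_1^{T-s+1} \le 1$.

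Finally I would take expectations and use a bias–variance decomposition. Conditioning on the history up to step $t$ (so that $\widehat\theta_t = \mathcal{C}(\theta_t)$ is fixed), unbiasedness from Assumption~\ref{ass:boundgrad} makes the cross term $\E[\langle g_t - \nabla f(\widehat\theta_t),\, \nabla f(\widehat\theta_t)\rangle]$ vanish, so $\E[\|g_t\|^2] = \E[\|g_t - \nabla f(\widehat\theta_t)\|^2] + \E[\|\nabla f(\widehat\theta_t)\|^2] \le \sigma^2 + \E[\|\nabla f(\widehat\theta_t)\|^2]$ by Assumption~\ref{ass:var}. Combining with the previous display, $\sum_{t=1}^T \E[\|m_t\|^2] \le \sum_{t=1}^T \E[\|g_t\|^2] \le T\sigma^2 + \sum_{t=1}^T \E[\|\nabla f(\widehat\theta_t)\|^2]$, which is the claim.

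There is no substantial obstacle here: the only points requiring minor care are (i) keeping track of the fact that the momentum weights $w_{t,s}$ sum to $1-\beta_1^t$ rather than exactly $1$, so that Jensen still applies with the harmless extra factor $W_t \le 1$, and (ii) justifying that the stochastic-gradient cross term is a martingale difference and hence vanishes in expectation. Both are routine, so I expect the proof to be short.
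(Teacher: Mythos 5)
Your proposal is correct and follows essentially the same route as the paper: bound $\|m_t\|^2$ by the exponentially-weighted (sub-)convex combination of $\|g_s\|^2$, sum over $t$ and swap the order of summation to get $\sum_t\E\|m_t\|^2\le\sum_t\E\|g_t\|^2$, then split each $\E\|g_t\|^2$ into variance plus squared gradient norm via unbiasedness and Assumption~\ref{ass:var}. The only cosmetic differences are that you prove the uniform bound $\|m_t\|\le G$ by a triangle-inequality induction rather than via convexity of $\|\cdot\|^2$, and you spell out the martingale-difference/conditioning argument that the paper leaves implicit; both are harmless.
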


\begin{proof} Let us start with the proof of the first bound on $m_t$.
\begin{eqnarray*}
\|m_{t+1}\|^2
&=& \left\|\beta_1 m_t + (1-\beta_1)g_{t+1} \right\|^2 \\
&\le& \beta_1 \|m_t\|^2 + (1-\beta_1)\|g_{t+1} \|^2 \\
&\le& \beta_1^t \|m_1\| + (1-\beta_1)\sum_{\tau=2}^{t+1} \beta_1^{t+1-\tau}\|g_{\tau}\|^2
= (1-\beta_1)\sum_{\tau=1}^{t+1} \beta_1^{t+1-\tau}\|g_{\tau}\|^2.
\end{eqnarray*}

Using the bounded gradient assumption, we get
$$
\|m_{t}\|^2 \le (1-\beta_1)G^2\sum_{\tau=1}^{t} \beta_1^{t-\tau} \le G^2.
$$

To derive the bound with expectation, we apply Cauchy-Schwartz inequality and the bounded variance assumption:
\begin{eqnarray*}
\sum_{t=1}^T \E\left[\|m_{t}\|^2\right]
&\le& (1-\beta_1) \sum_{t=1}^T\sum_{\tau=1}^{t} \beta_1^{t-\tau} \E\left[\|g_{\tau}\|^2 \right] \\
&\le& \sum_{t=1}^T \E\left[\|g_{t}\|^2 \right]
= \sum_{t=1}^{T} \E\left[\|g_{t} - \nabla f(\widehat\theta_t) + \nabla f(\widehat\theta_t)\|^2 \right] \\
&\le& \sum_{t=1}^{T} \left( \sigma^2 + \E\left[\|\nabla f(\widehat\theta_{t})\|^2\right] \right)
= T\sigma^2 + \sum_{t=1}^{T} \E\left[\|\nabla f(\widehat\theta_{t})\|^2\right].
\end{eqnarray*}
\end{proof}

\begin{lemma}  \label{lem:bound-precond}
For $\Delta\Gamma_t = \Gamma_{t-1} - \Gamma_t$ we have
\begin{align*}
	&\sum_{t=1}^T \|\Delta\Gamma_t\| \leq \frac{1}{\sqrt\epsilon},\quad  \sum_{t=1}^T \|\Delta\Gamma_t\|^2 \leq \frac{1}{\epsilon}.
\end{align*}
\end{lemma}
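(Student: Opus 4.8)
The plan is to exploit the AMSGrad normalization, which forces the preconditioner to be monotone along the trajectory, and then telescope. Recall $\Gamma_t = \diag\!\big((\tilde v_t+\epsilon)^{-\nicefrac{1}{2}}\big)$. Because the step $\tilde v_t = \max(v_t,\tilde v_{t-1})$ in Algorithm~\ref{alg:AdamSTE} (line~\ref{line:v}) is taken coordinate-wise, we have $\tilde v_t \succeq \tilde v_{t-1}$ for every $t$, and since $\tilde v_0 = 0$ this yields $0 \preceq \tilde v_{t-1} \preceq \tilde v_t$ throughout. Hence each diagonal entry $\big((\tilde v_t)_i+\epsilon\big)^{-\nicefrac{1}{2}}$ is non-increasing in $t$, so $\Delta\Gamma_t = \Gamma_{t-1}-\Gamma_t$ is a diagonal matrix with non-negative entries, i.e. $0 \preceq \Delta\Gamma_t \preceq \Gamma_{t-1} \preceq \Gamma_0 = \epsilon^{-\nicefrac{1}{2}} I_N$. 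In particular, every single increment already satisfies $\|\Delta\Gamma_t\| \le \|\Gamma_0\| = \epsilon^{-\nicefrac{1}{2}}$.

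For the first inequality I would telescope the PSD diagonal increments: $\sum_{t=1}^{T}\Delta\Gamma_t = \Gamma_0 - \Gamma_T$, and since $\Gamma_T \succeq 0$ this matrix is $\preceq \Gamma_0 = \epsilon^{-\nicefrac{1}{2}} I_N$; handled coordinate by coordinate (all the $\Delta\Gamma_t$ being simultaneously diagonal), this bounds $\sum_{t=1}^{T}\|\Delta\Gamma_t\|$ by $\|\Gamma_0 - \Gamma_T\| \le \epsilon^{-\nicefrac{1}{2}}$. The second inequality then follows with no extra work: $\|\Delta\Gamma_t\|^2 \le \big(\max_{1\le s\le T}\|\Delta\Gamma_s\|\big)\,\|\Delta\Gamma_t\| \le \epsilon^{-\nicefrac{1}{2}}\,\|\Delta\Gamma_t\|$, so summing over $t$ and invoking the first inequality gives $\sum_{t=1}^{T}\|\Delta\Gamma_t\|^2 \le \epsilon^{-\nicefrac{1}{2}}\sum_{t=1}^{T}\|\Delta\Gamma_t\| \le \epsilon^{-1}$.

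The step that needs genuine care---and the one I expect to be the main obstacle---is the passage from $\sum_t\|\Delta\Gamma_t\|$ to $\big\|\sum_t\Delta\Gamma_t\big\|$, which is \emph{not} the ordinary triangle inequality and has to be extracted from the specific structure at hand: all $\Delta\Gamma_t$ are PSD and share the standard-basis eigenvectors, so the accumulated operator norm should be controlled by the telescoped total $\Gamma_0-\Gamma_T$ acting coordinate-wise. Everything else reduces to the AMSGrad monotonicity observation, the identity $\Gamma_0 = \epsilon^{-\nicefrac{1}{2}} I_N$ (which uses $\tilde v_0 = 0$), and the elementary factorization $\|\Delta\Gamma_t\|^2 \le \max_s\|\Delta\Gamma_s\|\cdot\|\Delta\Gamma_t\|$.
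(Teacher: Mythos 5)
Your overall route is the same as the paper's: AMSGrad monotonicity makes each $\Delta\Gamma_t$ a nonnegative diagonal matrix, one telescopes coordinate-wise against $\Gamma_0=\epsilon^{-\nicefrac{1}{2}}I$, and the squared bound is then deduced from the first bound (you do this via $\|\Delta\Gamma_t\|\le \epsilon^{-\nicefrac{1}{2}}$, the paper via $(a-b)^2\le a^2-b^2$ and telescoping the squares; either works once the first bound is in hand).

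The problem is exactly the step you flagged and then waved through. For nonnegative diagonal matrices one has $\|\Delta\Gamma_t\|=\max_i \Delta\Gamma_{t,i}$, and the telescoping identity $\sum_{t=1}^T\Delta\Gamma_t=\Gamma_0-\Gamma_T$ only controls $\max_i\sum_t \Delta\Gamma_{t,i}$. The quantity you need, $\sum_t\max_i \Delta\Gamma_{t,i}$, dominates the max-of-sums rather than being dominated by it, and simultaneous diagonalizability does not reverse this: with $N=2$, $\Delta\Gamma_1=\diag(a,0)$ and $\Delta\Gamma_2=\diag(0,a)$ (the maximizing coordinate changes between steps), you get $\sum_t\|\Delta\Gamma_t\|=2a$ while $\|\Gamma_0-\Gamma_T\|=a$. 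What coordinate-wise telescoping rigorously yields is only $\sum_t\|\Delta\Gamma_t\|\le\sum_i\sum_t \Delta\Gamma_{t,i}\le N/\sqrt{\epsilon}$ (and correspondingly $N/\epsilon$ for the squares), or the stated $1/\sqrt{\epsilon}$ under the extra assumption that the argmax coordinate is the same at every step. So your assertion that the PSD-diagonal structure lets the telescoped total control the accumulated operator norm is unsubstantiated as written. For comparison, the paper's own proof performs the identical exchange --- it writes $\sum_t\|\Delta\Gamma_t\|=\max_i\sum_t\Delta\Gamma_{t,i}$, citing convexity of $\lambda_{\max}$, which in fact gives only the inequality in the unhelpful direction --- so you have faithfully reproduced the paper's argument including its weak point; but judged on its own, this step would fail in general and needs either the dimension-dependent bound or an additional argument.
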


\begin{proof}
From the definitions of $\Gamma_t = \diag^{-\nicefrac{1}{2}}(\tilde{v}_t+\epsilon)$ and $\tilde v_t = \max(v_t, \tilde v_{t-1})$ imply that $\Delta\Gamma_t = \Gamma_{t-1} - \Gamma_t$ is positive semidefinite. Hence, $\|\Delta\Gamma_t\| = \lambda_{\max}(\Delta\Gamma_t)\ge0$. Using the convexity of $\lambda_{\max}$ over symmetric matrices, we get
\begin{eqnarray*}
\sum_{t=1}^T \|\Delta\Gamma_t\|
&=& \max_{i}\sum_{t=1}^T \Delta\Gamma_{t,i} \\
&=& \max_{i}\sum_{t=1}^T \left(\frac{1}{\sqrt{\tilde v_{t-1,i} + \epsilon}} - \frac{1}{\sqrt{\tilde v_{t,i} + \epsilon}} \right)
= \max_{i}\left(\frac{1}{\sqrt{\tilde v_{0,i} + \epsilon}} - \frac{1}{\sqrt{\tilde v_{T,i} + \epsilon}} \right)
\le \frac{1}{\sqrt{\epsilon}}\\
\end{eqnarray*}

For the second sum of squared norms, notice that for scalars $a\ge b\ge 0$, it holds that
\begin{equation*}
	(a-b)^2\leq (a-b)(a+b)=a^2-b^2.
\end{equation*}
Therefore, the above derivation can be repeated without the square roots as follows:
\begin{eqnarray*}
\sum_{t=1}^T \|\Delta\Gamma_t\|^2
&=& \max_{i}\sum_{t=1}^T \Delta\Gamma_{t,i}^2 \\
&=& \max_{i}\sum_{t=1}^T \left(\frac{1}{\sqrt{\tilde v_{t-1,i} + \epsilon}} - \frac{1}{\sqrt{\tilde v_{t,i} + \epsilon}} \right)^2 \\
&=& \max_{i}\sum_{t=1}^T \left(\frac{1}{\tilde v_{t-1,i} + \epsilon} - \frac{1}{\tilde v_{t,i} + \epsilon} \right)
= \max_{i}\left(\frac{1}{\tilde v_{0,i} + \epsilon} - \frac{1}{\tilde v_{T,i} + \epsilon} \right)
\le \frac{1}{\epsilon},
\end{eqnarray*}
which completes the proof.
\end{proof}

\begin{lemma} \label{lem:bound-v}
	For all iterates $t\ge1$ the following bound holds
	$$\|\tilde v_{t}\|_{\infty} \le G^2.$$
\end{lemma}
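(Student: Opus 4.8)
The plan is to prove the bound coordinate-wise via a short two-step induction, exploiting that the AMSGrad iterate $\tilde v_t$ and the second-moment iterate $v_t$ are both built from the squared stochastic gradients, each of which is uniformly bounded. Concretely, for every coordinate $i$ and every index $\tau$ we have $g_{\tau,i}^2 \le \|g_\tau\|_\infty^2 \le \|g_\tau\|_2^2 \le G^2$, using the norm inequality $\|\cdot\|_\infty \le \|\cdot\|_2$ together with the gradient bound (the $\ell_2$ version $\|g_t\|_2 \le G$ introduced at the start of the proof of Theorem~\ref{thm:adam-ste}, itself following from Assumption~\ref{ass:boundgrad}).

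\textbf{Step 1 (bound $v_t$).} I would unroll the exponential-moving-average recursion $v_t = \beta_2 v_{t-1} + (1-\beta_2) g_t^2$ with the zero initialization $v_0 = 0_N$, obtaining coordinate-wise $v_{t,i} = (1-\beta_2)\sum_{\tau=1}^t \beta_2^{t-\tau} g_{\tau,i}^2$. Bounding each $g_{\tau,i}^2$ by $G^2$ and using $(1-\beta_2)\sum_{\tau=1}^t \beta_2^{t-\tau} = 1 - \beta_2^t \le 1$ gives $v_{t,i} \le G^2$ for all $i$ and all $t$, i.e. $\|v_t\|_\infty \le G^2$.

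\textbf{Step 2 (bound $\tilde v_t$).} I would then induct on $t$. The base case is $\tilde v_0 = 0_N$, so $\|\tilde v_0\|_\infty = 0 \le G^2$. For the inductive step, assuming $\|\tilde v_{t-1}\|_\infty \le G^2$, the element-wise maximum $\tilde v_t = \max(v_t, \tilde v_{t-1})$ satisfies, for every coordinate $i$, $\tilde v_{t,i} = \max(v_{t,i}, \tilde v_{t-1,i}) \le \max(G^2, G^2) = G^2$ by Step 1 and the inductive hypothesis; taking the maximum over $i$ yields $\|\tilde v_t\|_\infty \le G^2$, closing the induction.

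I do not expect a genuine obstacle here: the argument is elementary. The only points requiring a little care are (i) that the $\max$ in the AMSGrad line is applied element-wise, so the bound must be tracked per coordinate rather than through an operator-norm argument, and (ii) the zero initialization $v_0 = \tilde v_0 = 0_N$, which makes the geometric weights in Step 1 sum to at most one and gives the trivial base case in Step 2.
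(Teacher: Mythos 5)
Your proof is correct and follows essentially the same route as the paper: unroll the exponential-moving-average recursion for $v_t$ from the zero initialization, bound each squared-gradient contribution by $G^2$ so the geometric weights sum to at most one, and then propagate the bound through the element-wise $\max$ defining $\tilde v_t$ by induction. The only cosmetic difference is that you track the bound per coordinate while the paper runs the same recursion on $\|v_t\|_1$ (using $\|g_t^2\|_1=\|g_t\|_2^2\le G^2$) before passing to the $\ell_\infty$ norm; both are valid and equivalent in substance.
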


\begin{proof}
From the update rules we get the bound for $v_t$ using the initialization $v_0=0$:
\begin{eqnarray*}
\|v_{t}\|_{\infty} \le \|v_{t}\|_{1}
&\le& \beta_2\|v_{t-1}\|_{1} + (1-\beta_2)\|g_t\|^2 \\
&\le& \beta_2\|v_{t-1}\|_{1} + (1-\beta_2)G^2 \\
&\le& \beta_2^t \|v_0\|_{1} + (1-\beta_2)G^2\sum_{\tau=0}^{t-1} \beta_2^\tau
\le G^2.
\end{eqnarray*}
Hence, using the update rule of $\tilde v_t$ and initialization $\tilde v_0=0$, we conclude
$$
\|\tilde v_{t}\|_{\infty}
\le \max( \|v_{t}\|_{\infty},  \|\tilde v_{t-1}\|_{\infty})
\le G^2.
$$
\end{proof}

% \subsubsection{SGD with STE for Quadratics}

Next, we simplify the optimization setup by considering SGD optimizer over (still generally non-convex) quadratics. In this special case, we provide improved and generally optimal asymptotic convergence rate. Moreover, we do not use the bounded gradient condition (i.e., $\|g_t\|_{\infty}\le G_{\infty}$) of Assumption~\ref{ass:boundgrad} in this analysis.

More formally, consider iterates $\theta_{t+1} = \theta_t - \eta \widetilde\nabla_{\theta} f(\widehat\theta_t)$, where $\widehat\theta_t = \mathcal{C}(\theta_t)$ is the compressed model. Suppose that the loss function is quadratic with Hessian matrix ${\bf H}\in\R^{N\times N}$ and our compression scheme $\mathcal{C}\colon\R^N\to\R^N$ is unbiased, namely $\E_{\mathcal{C}}[\widehat\theta_t]=\theta_t$.
% Let $e_t = \mathcal{C}(\theta_t)-\theta_t$ be the compression error. Then, $\E[e_t] = 0$ and denote $\gamma_t^2 \eqdef \E[\|e_t\|^2]$ and
Since the loss is quadratic, we have
$$
\nabla f(\widehat\theta_t) =
\nabla f(\theta_t + (\widehat\theta_t - \theta_t)) = \nabla f(\theta_t) + {\bf H}(\widehat\theta_t - \theta_t).
$$
Denote by $\E_t = \E[\cdot | \theta_t]$ the conditional expectation conditioned on iterate $\theta_t$, and apply unbiasedness of the compression to get
\begin{equation}\label{grad-split-quadratic}
\E_t\|\nabla f(\widehat\theta_t)\|^2 = \|\nabla f(\theta_t)\|^2 + \E_t\|{\bf H}(\widehat\theta_t-\theta_t)\|^2
\end{equation}
Therefore,
\begin{eqnarray*}
&& \E_t[f(\theta_{t+1}) - f^*] \\
&\le& (f(\theta_t) - f^*) - \eta\E_t[\langle \nabla f(\theta_t), \widetilde{\nabla} f(\widehat\theta_t)\rangle] + \frac{L\eta^2}{2}\E_t[\|\widetilde{\nabla} f(\widehat\theta_t)\|^2] \\
&\le& (f(\theta_t) - f^*) - \eta\E_t[\langle \nabla f(\theta_t), \nabla f(\widehat\theta_t)\rangle] + \frac{L\eta^2}{2}\E_t[\|\nabla f(\widehat\theta_t)\|^2] + \frac{L\eta^2}{2} \sigma^2 \\
&=& (f(\theta_t) - f^*) - \eta\E_t[\|\nabla f(\theta_t)\|^2] + \frac{L\eta^2}{2}\E_t[\|\nabla f(\theta_t)\|^2] + \frac{L\eta^2}{2}\E_t[\|{\bf H}(\widehat\theta_t-\theta_t)\|^2] + \frac{L\eta^2}{2} \sigma^2 \\
&=& (f(\theta_t) - f^*) - \eta(1 - L\eta/2)\E_t[\|\nabla f(\theta_t)\|^2] + \frac{L\eta^2}{2} (\E_t[\|\widehat\theta_t-\theta_t\|^2_{{\bf H}^2}] + \sigma^2) \\
&\le& (f(\theta_t) - f^*) - \frac{\eta}{2}\E_t[\|\nabla f(\theta_t)\|^2] + \frac{L\eta^2}{2} (\E_t[\|\widehat\theta_t-\theta_t\|^2_{{\bf H}^2}] + \sigma^2),
\end{eqnarray*}
where we used $\E_t[\nabla f(\widehat\theta_t)] = \nabla f(\theta_t)$ due to the unbiasedness of compression and enforced the bound $\eta\le\frac{1}{L}$ in the last step. Hence,
$$
\frac{1}{T}\sum_{t=1}^T \E[\|\nabla f(\theta_t)\|^2] \le \frac{2(f(x_1) - f^*)}{\eta T} + \eta L \left( \E\left[\frac{1}{T}\sum_{t=1}^T \|\widehat\theta_t-\theta_t\|^2_{{\bf H}^2}\right] + \sigma^2 \right).
$$
Choosing the step size $\eta = \min(\frac{1}{L},\frac{1}{\sqrt{T}})$ and applying $L$-smoothness, we get $\mathcal{O}(1/\sqrt{T})$ convergence rate for the uncompressed iterates $\theta_t$:
$$
\frac{1}{T}\sum_{t=1}^T \E[\|\nabla f(\theta_t)\|^2]
\le \frac{1}{\sqrt{T}}\left( 2(f(x_1) - f^*) + L\sigma^2 + L^3 \textcolor{blue}{\E\left[\frac{1}{T}\sum_{t=1}^T \|\widehat\theta_t-\theta_t\|_2^2\right]} \right) \max\left(1, \frac{L}{\sqrt{T}} \right).
$$
For the convergence bound with respect to the compressed iterates $\widehat\theta_t$, we apply \eqref{grad-split-quadratic} to quantify the exact difference in average gradient norms with the following identity:
$$
\frac{1}{T}\sum_{t=1}^T \E[\|\nabla f(\widehat\theta_t)\|^2_2]
= \frac{1}{T}\sum_{t=1}^T \E[\|\nabla f(\theta_t)\|^2_2] + \E\left[\frac{1}{T}\sum_{t=1}^T \|{\bf H}(\widehat\theta_t-\theta_t)\|^2_2 \right].
$$
Thus, a randomly chosen compressed iterate $\widehat\theta$ from $\{\widehat\theta_1,\dots,\widehat\theta_T\}$ satisfies
$$
\E[\|\nabla f(\widehat\theta)\|^2]
\le L^2 \cdot \textcolor{blue}{\E\left[ \frac{1}{T}\sum_{t=1}^T\|\widehat{\theta}_t - \theta_t\|^2_2\right]} + \mathcal{O}\left(\frac{1}{\sqrt{T}}\right).
$$

\section{Improved Sparse Training via RBBM}
\label{app:rbbm}

\subsection{Comparison against RigL}
In this subsection we compare our backward mask heuristic in Figure\ref{fig:backward-heuristic-efficiency} with the RigL method of (Evci et al., 2020). We run two instances of RigL: 1) the default one that updates the mask once at 100 steps (i.e. $\Delta=100$) and updates the mask for the last time at 75\% of training and 2) a version of RigL that is closer to our RBBM setup, which changes the mask at each step (i.e. $\Delta=1$) during the entire training. In Figure~\ref{fig:sparse-training-rigl} we observe that both versions of RigL induce lower capacity than our naive baseline for a fixed sparsity.

\begin{figure}[H]
    \centering
    \begin{subfigure}[t]{0.48\textwidth}
        \centering
        \includegraphics[width=\textwidth]{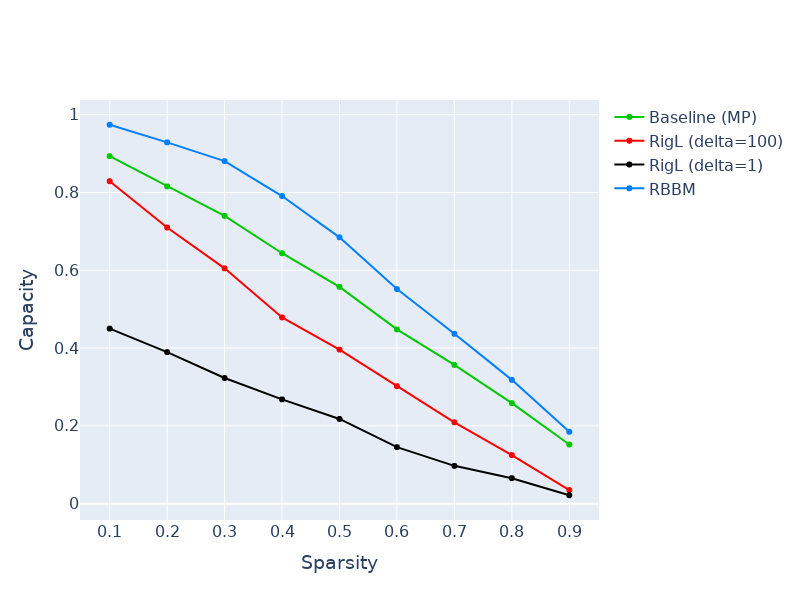}
        \caption{Pre-training Llama-30M with different sparsities using our MP baseline, RBBM heuristic and RigL variations.}
        \label{fig:sparse-training-rigl}
    \end{subfigure}
    \hfill
    \begin{subfigure}[t]{0.48\textwidth}
        \centering
        \includegraphics[width=\textwidth]{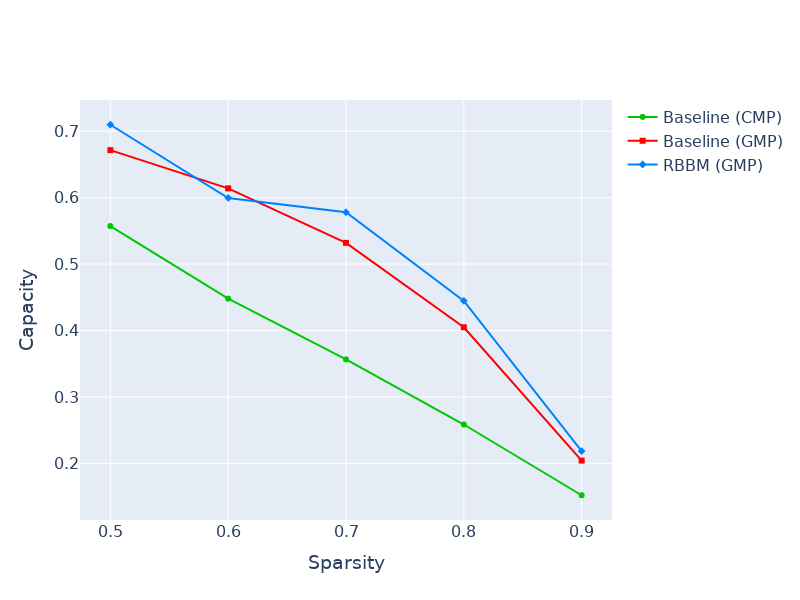}
        \caption{Pre-training Llama-30M with different sparsities using our constant MP (CMP) baseline, GMP and RBBM heuristic with GMP schedule.}
        \label{fig:sparse-training-gmp-sched}
    \end{subfigure}
    \caption{Comparison of sparse training methods for Llama-30M.}
    \label{fig:sparse-training-comparison}
\end{figure}

\subsection{Comparison against Gradual Magnitude Pruning (GMP)}
In this section we show our results for applying the GMP sparsity schedule~\cite{zhu2017prunepruneexploringefficacy} for our setup in Figure~\ref{fig:sparse-training-gmp-sched}. Our first baseline is the constant Magnitude Pruning (CMP), where the backward mask is identical to the forward mask (determined by Top-K) and the sparsity is kept fixed during training. The second baseline is the original GMP where sparsity increases gradually and we compare against the gradual sparsity schedule applied to our \textbf{b-rms} heuristic.

We observe our RBBM heuristic with GMP schedule has lower capacity than both CMP and GMP baselines when sparsity is $<40\%$. However, for sparsities $\geq 40\%$ there is no significant difference in capacity between CMP and GMP schedules.

% \begin{figure}[H]
%     \centering
%     \caption{Pre-training Llama-30M with different sparsities using our constant MP (CMP) baseline, GMP and RBBM heuristic with GMP schedule.}\label{fig:sparse-training-gmp-sched}
%     \includegraphics[width=0.75\linewidth]{plots/backward-heuristics/gmp-schedule-rbbm.png}
% \end{figure}

\subsection{Backward Mask Heuristics}
In this section we provide more details about our backward heuristics.

Our purpose is to perform sparse training for both forward and backward passes. All models are trained with the same learning rates as in the Quest project.

\paragraph{Notation.} Let $\theta$ be the model parameters, $M_{FW}$ be the be the mask for the forward pass, and $M_{BW}$ be the mask for the backward pass.

\paragraph{Forward Pass.} The mask for the forward pass is computed using Top-K operator, where $K$ is chosen based on the target sparsity. Supposing Top-K returns the indices of largest entries by magnitude, the $i^{th}$ entry in the forward mask for a tensor $x$ is computed using the indicator function $\mathbb{I}$ as follows:

\begin{equation}
    M_{FW}^{i}(x) = \mathbb{I}[i \in TopK(|x|)]
\end{equation}

\paragraph{Backward Pass.} The mask for the backward pass is computed using a few heuristics described below.

\begin{enumerate}[left=0pt]
    \item \textbf{fw:} the backward mask is simply set to the forward mask: $M_{BW} = M_{FW}$. This heuristic allows gradients to flow only through the largest parameters by magnitude selected by Top-K, while the low-magnitude parameters will have zero gradient

    \item \textbf{rms:} we align the tensor $x$ with the standard normal distribution by dividing $x$ by its root mean square $RMS(x) = \sqrt{\frac{1}{n} \sum_{i=1}^n x_i^2}$, which results in $||x / RMS(x)||_2^2 = n$. For this heuristic, the user sets a median deviation parameter $p \in (0, 0.5)$, which is used to determine the threshold for the backward mask $T_{RMS}(x, p) = RMS(x) \cdot ppf(0.5 + p)$, where $ppf$ is the inverse cumulative distribution function of the standard normal distribution (see the \href{https://docs.scipy.org/doc/scipy/reference/generated/scipy.stats.norm.html#scipy.stats.norm:~:text=ppf(q%2C%20loc%3D0%2C%20scale%3D1)}{scipy.stats.norm.ppf} function). The multiplication with $RMS(x)$ has the purpose of converting the threshold for the standard normal distribution to the threshold for the vector $x$. As a result, $M_{BW} = |x| > T_{RMS}(x, p)$.

    \item \textbf{banded-rms (b-rms):} the \textbf{rms} heuristic has the property that the absolute values of $x$ that are larger than the threshold $T_{RMS}(x, p)$ will have value $1$ in the mask, while the smaller ones will have value $0$. This banded heuristic determines the backward mask using the threshold $T_{RMS}(x, p)$ computed for the \textbf{rms} heuristic in conjunction with the Top-K threshold (which we denote by $T_k$). We want to allow gradients to flow for the small parameters and create a band between $T_{RMS}(x, p)$ and $T_k$ where we do not allow gradients. Concretely, the backward mask is set as follows: $M_{BW} = \left(|x| < min(T_{RMS}(x, p), T_k)\right) \lor \left(max(T_{RMS}(x, p), T_k) < |x|\right)$. Since the median deviation $p$ is a hyper-parameter, we do not have any control over the relationship between $T_{RMS}(x, p)$ and $T_k$ and we are using the $min$ and $max$ functions to make sure the band is valid, e.g. the parameters do not receive gradient if they lie in the interval $[min(T_k, T_{RMS}(x, p)), max(T_k, T_{RMS}(x, p))]$.

    \item \textbf{area-banded-rms (a-b-rms):} in the \textbf{b-rms} heuristic we do not have any control over the relationship between the Top-K threshold $T_k$ and $T_{RMS}(x, p)$. Let us discuss the two possible cases:
    \begin{enumerate}
        \item $T_k < T_{RMS}(x, p): M_{BW} = \left(|x| < T_k \right) \lor \left(T_{RMS}(x,p) < |x|\right)$, which means that all values from $x$ with a lower magnitude than $T_k$ and larger magnitude than $T_{RMS}(x, p)$ will get gradient, while the values in the range $[T_k, T_{RMS}(x,p)]$ will not receive gradient, even though they were selected among the Top-K during the forward pass.
    
        \item $T_{RMS}(x,p) < T_k: M_{BW} = \left(|x| < T_{RMS}(x,p) \right) \lor \left(T_k < |x|\right)$, which is the desired case we developed the \textbf{b-rms} heuristic for: the largest entries from $x$ according to the Top-K rule will receive gradient, as well as the entries smaller than $T_{RMS}(x,p)$. The entries lying in the interval $[T_{RMS}(x,p), T_k]$ will not receive gradient.
    \end{enumerate}

We want to make sure that case \textbf{(a)} above does not happen in practice and force the heuristic to behave as in the case \textbf{(b)}. For this, we need to change the way we compute the threshold $T_{RMS}(x, p)$.

The \textbf{area-b-rms} heuristic uses the area hyper-parameter $a \in [0, 1]$ (instead of median deviation $p$) and expresses the width of the band starting from the Top-K parameter $T_k$ towards zero to compute the threshold $T_a$ to make sure the condition $T_a < T_k$ always holds. As a result, $M_{BW} = \left(|x| < T_a\right) \lor \left(T_k < |x|\right)$. For example, $a=0$ yields $T_k = T_a$ and this heuristic turns into \textbf{fw}, while $a=1$ yields $T_a = 0$ and is equivalent to $M_{BW} = \textbf{1}_d$ (all entries set to $1$, meaning all parameters get gradients). When $a \in (0, 1)$, the parameters smaller than $T_a$ or larger than $T_k$ get gradients, while the parameters lying in the interval $[T_a, T_k]$ do not get gradients.

\paragraph{How to compute the threshold $T_a$?} Compared to the threshold computation for the previous heuristic, the definition for $T_a$ is slightly more complicated and it was computed graphically. Let $f$ be the $cdf$ function and $f^{-1}$ be the $ppf$ function (inverse cdf) for the standard normal distribution.

    \begin{equation}
        T_a = RMS(x) \cdot f^{-1}\left(0.5 + (1-a) \cdot \left(f\left(\frac{T_k}{RMS(x)}\right) - 0.5\right)\right)
    \end{equation}

\paragraph{Explanations for the formula above.} Suppose the Top-K threshold $T_k$ has a corresponding cdf of $0.8$ and we set $a = 0.5$ (which means $50\%$). We need to set the threshold $T_a$ such that $(f(T_k) - f(T_a)) / (f(T_k) - 0.5) = a$, where $0.5$ is the cdf of the mean (which is identical to the median for a standard normal distribution). This ratio expresses the length of the band $[0.5, f(T_k)]$ in the cdf space starting from $f(T_k)$ towards the median. As a consequence, the threshold $T_a = ppf(0.65)$ because the quantile $0.65$ is the center of the interval $[0.5, cdf(T_k)] = [0.5, 0.8]$. The explanations of each term follow:

    \begin{equation*}
        T_a = \underbrace{RMS(x) \cdot \underbrace{f^{-1}\left(\underbrace{0.5 + \underbrace{(1-a) \cdot \underbrace{\left(\underbrace{f\left(\frac{T_k}{RMS(x)}\right)}_{=A} - 0.5\right)}_{=B}}_{=C}}_{=D}\right)}_{=E}}_{=F}
    \end{equation*}

    \begin{itemize}
        \item \textbf{A:} $f=cdf$ computes the corresponding quantile of the Top-K threshold $T_k$ normalized by $RMS(x)$
        
        \item \textbf{B:} subtract $0.5$ from term $A$ to compute the length of the interval $[0.5, f(T_k^{RMS})]$

        \item \textbf{C:} multiply by $1-a$ because we take into consideration the band length that starts at $f(T_k^{RMS})$ towards 0

        \item \textbf{D:} compute the cdf of $T_a$ by offsetting again by $0.5$ (the quantile of the median)

        \item \textbf{E:} use $ppf=f^{-1}$ to obtain the value that corresponds to $cdf(T_a)$ for the standard normal distribution

        \item \textbf{F:} multiply by $RMS(x)$ to obtain $T_a$ in the same space as $x$
    \end{itemize}

    \paragraph{Technical note.} One could determine the threshold $T_a$ naively by employing the formula $T_a^{naive} = (1-a) T_k$. Despite simpler, this naive approach leads to a narrower band because the cdf space is non-linear.

    \paragraph{Conclusion.} The mask computed using the \textbf{a-b-rms} heuristic is more straightforward to understand because the parameter $a$ describes the area of the red band (where parameters do not receive gradients) as a percentage of the area between 0 and the Top-K threshold $T_k$. This heuristic can be used as a replacement for \textbf{b-rms} and the parameter $a$ should be tuned, similarly to parameter $p$ for \textbf{b-rms}, with the distinction that $a \in [0, 1]$ (for \textbf{a-b-rms}) and $p \in (0, 0.5)$ (for \textbf{b-rms}).
\end{enumerate}

\end{document}